\newcommand{\setseparator}{\,:\,}
\newcommand{\converges}[2][\infty]{\underset{{#2}\rightarrow{#1}}{\longrightarrow}}
\newcommand{\differential}[1]{\,\mathrm{d}{#1}}
\newcommand{\semiinner}[3][\banachspace]{\left[{#2},{#3}\right]_{#1}}
\newcommand{\inner}[3][\hilbertspace]{\left<{#2},{#3}\right>_{#1}}
\renewcommand{\Re}{\operatorname{Re}}
\DeclareMathOperator{\vecspan}{span}
\DeclareMathOperator{\codim}{codim}
\DeclarePairedDelimiter\norm{\lVert}{\rVert}
\DeclarePairedDelimiter\modulus{\lvert}{\rvert}
\newcommand{\N}{\mathbb{N}}
\newcommand{\R}{\mathbb{R}}
\newcommand{\C}{\mathbb{C}}
\newcommand{\hilbertspace}{\mathcal{H}}
\newcommand{\banachspace}{\mathcal{B}}
\newcommand{\thetitle}{When is there a Representer Theorem?\\Nondifferentiable Regularisers and Banach spaces}
\newcommand{\theauthor}{Kevin Schlegel}
\newcommand{\theaddress}{Mathematical Institute\\
University of Oxford\\
Andrew Wiles Building, Radcliffe Observatory Quarter\\
Woodstock Road, Oxford, OX2 6GG, UK}
\newcommand{\theemail}{schlegel@maths.ox.ac.uk}
\newtheoremstyle{theorem}
  {5mm}
  {7mm}
  {\addtolength{\@totalleftmargin}{7mm}
   \addtolength{\linewidth}{-7mm}
   \parshape 1 7mm \linewidth}
  {-7mm}
  {\bfseries}
  {}
  {\newline}
  {\normalfont\textbf{\thmname{#1}\thmnumber{ #2}}\textit{\thmnote{ (#3)}}}
\theoremstyle{theorem}
\newtheorem{theorem}{Theorem}[section]
\crefname{theorem}{theorem}{theorems}
\newaliascnt{proposition}{theorem}
\newtheorem{proposition}[proposition]{Proposition}
\crefname{proposition}{proposition}{propositionss}
\newaliascnt{lemma}{theorem}
\newtheorem{lemma}[lemma]{Lemma}
\crefname{lemma}{lemma}{lemmas}
\newaliascnt{remark}{theorem}
\newtheorem{remark}[remark]{Remark}
\crefname{remark}{remark}{remarks}
\newaliascnt{corollary}{theorem}
\crefname{corollary}{corollary}{corollaries}
\newaliascnt{definition}{theorem}
\newtheorem{definition}[definition]{Definition}
\crefname{definition}{definition}{definitionss}
\newaliascnt{example}{theorem}
\crefname{example}{example}{examples}
\newcommand{\proofend}{\hspace*{\fill}\ding{113}\\}
\let\oldendproof\endproof
\renewcommand{\endproof}{\proofend\oldendproof}
\newtheoremstyle{notation}
  {5mm}
  {5mm}
  {\addtolength{\@totalleftmargin}{7mm}
   \addtolength{\linewidth}{-7mm}
   \parshape 1 7mm \linewidth}
  {-7mm}
  {\bfseries}
  {}
  {\newline}
  {\normalfont\textbf{\thmname{#1}\thmnumber{ #2}:}\thmnote{ (#3)}}
\theoremstyle{notation}
\crefname{notation}{notation}{notations}
\newtheoremstyle{subproof}
  {0mm}
  {5mm}
  {}
  {}
  {\bfseries}
  {}
  {\newline}
  {\normalfont\textit{\textbf{\thmname{#1}\thmnumber{ #2}:}\thmnote{ (#3)}}}
\theoremstyle{subproof}
\newtheorem{subproof}{Part}
\crefname{subproof}{part}{parts}
\renewcommand{\maketitle}{
  \thispagestyle{empty}
  \begin{center}
    {\Large \thetitle \par}
    \vspace{5mm}
    {\large \theauthor \par}
    \vspace{2mm}
    {\small \theaddress \par}
    {\small Email: \textit{\theemail} \par}
    \vspace{3mm}
    \today
    \vspace{8mm}
  \end{center}
}
\begin{document}

\maketitle

\pagestyle{paper}
{\Large\bf Abstract}\\
We consider a general regularised interpolation problem for learning a parameter vector from data. The well known representer theorem says that under certain conditions on the regulariser there exists a solution in the linear span of the data points. This is the core of kernel methods in machine learning as it makes the problem computationally tractable. Necessary and sufficient conditions for differentiable regularisers on Hilbert spaces to admit a representer theorem have been proved. We extend those results to nondifferentiable regularisers on uniformly convex and uniformly smooth Banach spaces. This gives a (more) complete answer to the question when there is a representer theorem. We then note that for regularised interpolation in fact the solution is determined by the function space alone and independent of the regulariser, making the extension to Banach spaces even more valuable.

{\bf Keywords:} representer theorem, regularised interpolation, regularisation, semi-inner product spaces, kernel methods

\section{Introduction}
Regularisation is often described as a process of adding additional information or using previous knowledge about the solution to solve an ill-posed problem or to prevent an algorithm from overfitting to the given data. This makes it a very important method for learning a function from empirical data from very large classes of functions. Intuitively its purpose is to pick from all the functions that may explain the data the function which is the simplest in some suitable sense. Hence regularisation appears in various disciplines wherever empirical data is produced and has to be explained by a function. This has motivated to study regularisation problems in mathematics, statistics and computer science and in particular in machine learning theory (Cucker and Smale~\cite{smale2001}, Shawe-Taylor and Cristianini~\cite{shawe-taylor2004}, Micchelli and Pontil~\cite{micchelli2005}).\\
\ \\
In particular regularisation in Hilbert spaces has been studied in the literature for various reasons. First of all the existence of inner products allows for the design of algorithms with very clear geometric intuitions often based on orthogonal projections or the fact that the inner product can be seen as a kind of similarity measure.\\
But in fact crucial for the success of regularisation methods in Hilbert spaces is the well known \textit{representer theorem} which states that for certain regularisers there is always a solution in the linear span of the data points (Kimeldorf and Wahba~\cite{kimeldorf1971}, Cox and O'Sullivan~\cite{cox1990}, Sch\"{o}lkopf and Smola~\cite{smola1998,smola2001}). This means that the problem reduces to finding a function in a finite dimensional subspace of the original function space which is often infinite dimensional. It is this dimension reduction that makes the problem computationally tractable.\\
Another reason for Hilbert space regularisation finding a variety of applications is the \textit{kernel trick} which allows for any algorithm which is formulated in terms of inner products to be modified to yield a new algorithm based on a different symmetric, positive semidefinite kernel leading to learning in \textit{reproducing kernel Hilbert spaces} (Sch\"{o}lkopf and Smola~\cite{smola2002}, Shawe-Taylor and Cristianini~\cite{shawe-taylor2004}). This way nonlinearities can be introduced in the otherwise linear setup. Furthermore kernels can be defined on input sets which a priori do not have a mathematical structure by embeddings into a Hilbert space.\\
\ \\
When we are speaking of regularisation we are referring to \textit{Tikhonov regularisation}, i.e.~an optimisation problem of the form
$$\min\left\{\mathcal{E}({(\inner{f}{x_i},y_i)}^m_{i=1}) + \lambda\Omega(f)\setseparator f\in\hilbertspace\right\}$$
where $\hilbertspace$ is a Hilbert space, $\left\{(x_i,y_i) \setseparator i\in\N_m\right\}\subset\hilbertspace\times Y$ is a set of given input/output data with $Y\subseteq\R$, $\mathcal{E}\colon\R^m\times Y^m\rightarrow\R$ is an \textit{error function}, $\Omega\,\colon\hilbertspace\rightarrow\R$ a \textit{regulariser} and $\lambda>0$ is a{\textit{regularisation parameter}. Argyriou, Micchelli and Pontil~\cite{argyriou2009} show that under very mild conditions this regularisation problem admits a linear representer theorem if and only if the regularised interpolation problem
\begin{equation}
\label{eq:regularised_interpolation_hilbert}
  \min\left\{\Omega(f)\setseparator f\in\hilbertspace, \inner{f}{x_i}=y_i\,\forall i=1,\ldots,m\right\}
\end{equation}
admits a linear representer theorem. They argue that we can thus focus on the regularised interpolation problem which is more convenient to study. It is easy to see that their argument holds for the more general setting of the problem which we are going to introduce in this paper so we are going to take the same viewpoint in this paper and consider regularised interpolation.\\
\ \\
We will be interested in regularisation not only in Hilbert spaces as stated above but \textit{extend the theory to uniformly convex, uniformly smooth Banach spaces}, allowing for learning in a much larger variety of spaces. While any two Hilbert spaces of the same dimension are linearly isometrically isomorphic this is far from true for Banach spaces so they exhibit much richer geometric variety which may be exploited in learning algorithms. Furthermore we may encounter applications where the data has some intrinsic structure so that it cannot be embedded into a Hilbert space. Having a large amount of Banach spaces for potential embeddings may help to overcome this problem. Analogous to learning in reproducing kernel Hilbert spaces the generalisation to Banach spaces allows for learning in \textit{reproducing kernel Banach spaces} which have been introduced by Zhang, Xu and Zhang~\cite{zhang2009}. Our results regarding the existence of representer theorems are in line with Zhang and Zhang's work on representer theorems for reproducing kernel Banach spaces~\cite{zhang2012}.\\
But as we will show at the end of this paper the variety of spaces to pose the problem in is of even greater importance. It is often said that the regulariser favours solutions with a certain desirable property. We will show that in fact for regularised interpolation when we rely on the linear representer theorem it is essentially \textit{the choice of the space}, and only the choice of the space not the choice of the regulariser, which \textit{determines the solution}.\\
\ \\
It is well known that non-decreasing functions of the Hilbert space norm admit a linear representer theorem. Argyriou, Micchelli and Pontil~\cite{argyriou2009} showed that this condition is not just necessary but for differentiable regularisers also sufficient. In this paper we {\itshape remove the differentiablity condition\/} and show that any regulariser on a uniformly convex and uniformly smooth Banach space that admits a linear representer theorem is in fact very close to being radially symmetric, thus giving a (more) complete answer to the question when there is a representer theorem. Before presenting those results we present the necessary theory of semi-inner products to generalise the Hilbert space setting considered by Argyriou, Micchelli and Pontil to Banach spaces.\\
\ \\
In \cref{sec:semi-inner-product} we will introduce the notion of \textit{semi-inner products} as defined by Lumer~\cite{lumer1961} and later extended by Giles~\cite{giles1967}. We will state the results without proofs as they mostly are not difficult and can be found in the original papers. Another extensive reference about semi-inner products and their properties is the work by Dragomir~\cite{dragomir2004}.\\
After introducing the relevant theory we will present the generalised regularised interpolation problem in \cref{sec:representer-theorem}, replacing the inner product in \cref{eq:regularised_interpolation_hilbert} by a semi-inner product. We then state one of the main results of the paper that regularisers that admit a representer theorem are almost radially symmetric in a way that will be made precise in the statement. Before giving the proof of the theorem we state and prove two essential lemmas capturing most of the important structure of the problem to prove the theorem. We finish the section by giving the proof of the main result.\\
Finally in~\cref{sec:space-dependence} we prove that in fact for admissible regularisers there is a unique solution of the regularised interpolation problem in the linear span of the data and it is independent of the regulariser. This in particular means that we may choose the regulariser which is most suitable for our task at hand without changing the solution.

\subsection{Notation}
Before the main sections we briefly introduce some notation used throughout the paper. We use $\N_m$ as a shorthand notation for the set $\{1,\ldots,m\}\subset\N$. We will assume we have $m$ data points $\left\{(x_i,y_i) \setseparator i\in\N_m\right\}\subset\banachspace\times Y$, where $\banachspace$ will always denote a uniformly convex, uniformly smooth real Banach space and $Y\subseteq\mathbb{R}$. Typical examples of $Y$ are finite sets of integers for classification problems, e.g. $\{-1,1\}$ for binary classification, or the whole of $\R$ for regression.\\
We briefly recall the definitions of a Banach space being uniformly convex and uniformly smooth, further details can be found in~\cite{brezis2011, lindenstrauss1979, koethe1983}.
\begin{definition}[Uniformly convex Banach space]
  A normed vector space $V$ is said to be uniformly convex if for every $\varepsilon>0$ there exists a $\delta>0$ such that if $x,y\in V$ with $\norm{x}_V=\norm{y}_V=1$ and $\norm{x-y}_V > \varepsilon$ then $\norm{\frac{x+y}{2}}_V < 1 - \delta$.
\end{definition}
\begin{definition}[Uniformly smooth Banach space]
  A normed vector space $V$ is said to be uniformly smooth if for every\\
  $\varepsilon>0$ there exists $\delta>0$ such that if $x,y\in V$ with $\norm{x}_V=1, \norm{y}_V\leq\delta$ then $\norm{x+y}_V+\norm{x-y}_V\leq 2+\varepsilon\norm{y}_V$.
\end{definition}
\begin{remark}
\label{rmk:uniform_smoothness}
  There are two equivalent conditions of uniform smoothness which we will make use of in this paper.
  \begin{enumerate}[label=(\roman*)]
  \item The modulus of smoothness of the space $V$ is defined as
    \begin{equation}
      \label{eq:modulus_of_smoothness}
      \rho_V(\delta) = \sup\left\{\frac{\norm{x+y}_V+\norm{x-y}_V}{2} - 1 \setseparator \norm{x}_V=1, \norm{y}_V=\delta\right\}
    \end{equation}
    Now $V$ is uniformly smooth if and only if $\frac{\rho_V(\delta)}{\delta}\converges[0]{\delta}0$.
  \item The norm on $V$ is said to be uniformly Fr\'{e}chet differentiable if the limit
    $$\lim\limits_{t\rightarrow 0}\frac{\norm{x+t\cdot y}_V-\norm{x}_V}{t}$$
    exists uniformly for all real $t$ and $x,y\in V$ with $\norm{x}_V=\norm{y}_V= 1$. The space $V$ is uniformly smooth if its norm is uniformly Fr\'{e}chet differentiable.
  \end{enumerate}
\end{remark}
We always write $\hilbertspace$ to denote a Hilbert space and for the first part of \cref{sec:semi-inner-product} we will be speaking of general normed linear spaces denoted by $V$. Once we have seen the reasons to require the space to be a uniformly convex and uniformly smooth Banach space the remainder of \cref{sec:semi-inner-product} and the paper will consider such spaces denoted by $\banachspace$. When only the norm $\norm{\cdot}_\banachspace$ on $\banachspace$ is considered the subscript will often be omitted for simplicity. Throughout we will denote the inner product on a Hilbert space by $\inner{\cdot}{\cdot}$ and a semi-inner product on a normed linear space by $\semiinner[V]{\cdot}{\cdot}$.

\section{Semi-inner product spaces}
\label{sec:semi-inner-product}
There are various definitions of semi-inner products aiming to generalise Hilbert space methods to more general cases. The notion of semi-inner products we are going to use was first introduced by Lumer~\cite{lumer1961} and further developed by Giles~\cite{giles1967}. In comparison to inner products the assumption of (conjugate) symmetry, or equivalently additivity in the second argument, is dropped. This means that we need to assume the Cauchy-Schwarz inequality to make sure that it holds as it is crucial for the semi-inner products to have inner-product like behaviour. In the original definition Lumer did not assume homogeneity in the second argument but Giles argued that one can assume it without any significant restrictions. We will hence be including homogeneity in our assumptions.\\
An extensive overview of the theory of this and other notions of semi-inner products can be found in Dragomir~\cite{dragomir2004}.\\
In this section only we state all results for real or complex vector spaces as all of them are valid for the complex case. Throught this section we will thus denote the field by $\mathbb{F}$. In the subsequent sections where we present the main contributions of this paper we will return to real vector spaces as it is at this point not clear whether the results remain valid for complex vector spaces.

\begin{definition}[Semi-inner product]
 A semi-inner product (s.i.p.) on a real or complex vector space $V$ is a map $\semiinner[V]{\cdot}{\cdot}:V\times V\rightarrow\mathbb{F}$ with the following properties:
 \begin{enumerate}[label=(\roman*),series=semiinner]
  \item\label{itm:linearity} Linearity in the first argument:\\
    $\semiinner[V]{\lambda x+\mu y}{z} = \lambda\semiinner[V]{x}{z}+\mu\semiinner[V]{y}{z} \quad$ for all $x,y,z\in V$ and $\lambda,\mu\in\mathbb{F}$
  \item\label{itm:posdef} Positive definiteness:\\
    $\semiinner[V]{x}{x}\geq 0$ and $\semiinner[V]{x}{x}=0 \Leftrightarrow x=0$
  \item\label{itm:CSineq} Cauchy-Schwarz inequality:\\
    $\modulus{\semiinner[V]{x}{y}}^2 \leq \semiinner[V]{x}{x}\semiinner[V]{y}{y}$
  \item\label{itm:homogeneity} (Conjugate) homogeneity in the second argument:\\
    $\semiinner[V]{x}{\lambda y} = \overline{\lambda}\semiinner[V]{x}{y} \quad$ for all $x,y\in V$ and $\lambda\in\mathbb{F}$
 \end{enumerate}
\end{definition}
With these properties a semi-inner product $\semiinner[V]{\cdot}{\cdot}$ induces a norm $\semiinner[V]{x}{x}=\norm{x}_V$ on $V$. Conversely every norm $\norm{\cdot}_V$ on a linear space $V$ is induced by at least one semi-inner product, i.e.\@ there exists at least one semi-inner product $\semiinner[V]{\cdot}{\cdot}$ such that $\norm{x}_V=\semiinner[V]{x}{x}$. This means that every normed linear space is a s.i.p.\@ space. Consequently we say that an s.i.p.\@ space $V$ is uniformly convex if the norm induced by $\semiinner[V]{\cdot}{\cdot}$ is uniformly convex and the s.i.p.\@ space is uniformly smooth if the induced norm is uniformly smooth.\\
The semi-inner product inducing the norm is not unique in general though. It turns out that we have uniqueness if the norm is differentiable which is closely linked to a weak continuity property in the second argument of the inducing semi-inner product.

\begin{proposition}
  If the norm $\norm{\cdot}_V$ on $V$ is uniformly Fr\'{e}chet differentiable as defined in \cref{rmk:uniform_smoothness}, then
  \begin{equation}
    \label{eq:weak_continuity_property}
    \Re\semiinner[V]{x}{y+tx} \rightarrow \Re\semiinner[V]{x}{y}
  \end{equation}
  uniformly for every $x,y\in V$ with $\norm{x}_V=\norm{y}_V=1$ as  $\R\ni t\rightarrow 0$. Furthermore the differential of the norm for $x\neq0$ is given by
  $$\lim\limits_{t\rightarrow 0} \frac{\norm{x+ty}_V - \norm{x}_V}{t} = \frac{\Re\semiinner[V]{y}{x}}{\norm{x}_V}$$
  This in particular means that the semi-inner product inducing a uniformly Fr\'{e}chet differentiable norm is unique.
\end{proposition}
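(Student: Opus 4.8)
The plan is to derive all three assertions from one elementary two-sided estimate for the semi-inner product. \textbf{First}, for $u,v\in V$ with $v\neq 0$ and $s>0$ I would establish
$$\frac{\Re\semiinner[V]{u}{v}}{\norm{v}_V}\ \le\ \frac{\norm{v+su}_V-\norm{v}_V}{s}\ \le\ \frac{\Re\semiinner[V]{u}{v+su}}{\norm{v+su}_V}\qquad(v+su\neq 0).$$
The left inequality follows by writing $\norm{v}_V^{2}=\Re\semiinner[V]{v}{v}=\Re\semiinner[V]{v+su}{v}-s\Re\semiinner[V]{u}{v}$ (linearity in the first argument) and bounding $\Re\semiinner[V]{v+su}{v}\le\norm{v+su}_V\norm{v}_V$ by Cauchy--Schwarz; the right inequality follows symmetrically from $\norm{v+su}_V^{2}=\Re\semiinner[V]{v}{v+su}+s\Re\semiinner[V]{u}{v+su}$. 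Replacing $u$ by $-u$ yields the analogous chain for $s<0$.

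\textbf{Second}, I obtain the differential formula and uniqueness. Fix $x\neq 0$; by positive homogeneity of the norm it suffices to treat unit $x$ and $y$, where uniform Fréchet differentiability guarantees that $D_x(y):=\lim_{t\to 0}(\norm{x+ty}_V-\norm{x}_V)/t$ exists. Taking $v=x$, $u=y$ in the left inequality above and letting $s\downarrow 0$ bounds $D_x(y)$ from below by $\Re\semiinner[V]{y}{x}/\norm{x}_V$, while the $s<0$ version bounds it from above by the same quantity, so $D_x(y)=\Re\semiinner[V]{y}{x}/\norm{x}_V$, which is the claimed formula for the differential. Uniqueness is then immediate: if $\semiinner[1]{\cdot}{\cdot}$ and $\semiinner[2]{\cdot}{\cdot}$ both induce $\norm{\cdot}_V$, this formula forces $\Re\semiinner[1]{y}{x}=\Re\semiinner[2]{y}{x}$ for all $x,y$, and in the complex case conjugate homogeneity in the second argument recovers the imaginary part from the real part (e.g.\ $\Im\semiinner[V]{y}{x}=\Re\semiinner[V]{y}{ix}$), so the two semi-inner products coincide.

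\textbf{Third}, the uniform weak-continuity statement. Restrict to $\norm{x}_V=\norm{y}_V=1$, so $y+tx\neq 0$ for $\modulus{t}$ small. Chaining the two inequalities of the first step with $v=y$, $u=x$, $s=t>0$ gives $\Re\semiinner[V]{x}{y}\le\Re\semiinner[V]{x}{y+tx}/\norm{y+tx}_V$, hence $\Re\semiinner[V]{x}{y+tx}\ge\norm{y+tx}_V\,\Re\semiinner[V]{x}{y}$; since $\norm{y+tx}_V\to 1$ uniformly and $\modulus{\Re\semiinner[V]{x}{y}}\le 1$ by Cauchy--Schwarz, this yields $\liminf_{t\downarrow 0}\Re\semiinner[V]{x}{y+tx}\ge\Re\semiinner[V]{x}{y}$ uniformly. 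For the matching upper bound, the left inequality based at $v=y+tx$ with step $s=t$ yields
$$\Re\semiinner[V]{x}{y+tx}\ \le\ \norm{y+tx}_V\cdot\frac{\norm{y+2tx}_V-\norm{y+tx}_V}{t},$$
and the key move is the telescoping identity $\dfrac{\norm{y+2tx}_V-\norm{y+tx}_V}{t}=2\cdot\dfrac{\norm{y+2tx}_V-1}{2t}-\dfrac{\norm{y+tx}_V-1}{t}$: by \emph{uniform} Fréchet differentiability both difference quotients on the right converge, uniformly in $x,y$, to $D_y(x)=\Re\semiinner[V]{x}{y}$, so the whole expression converges uniformly to $2D_y(x)-D_y(x)=\Re\semiinner[V]{x}{y}$, whence $\limsup_{t\downarrow 0}\Re\semiinner[V]{x}{y+tx}\le\Re\semiinner[V]{x}{y}$ uniformly. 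This proves the convergence as $t\downarrow 0$; replacing $x$ by $-x$ and using linearity in the first argument gives it as $t\uparrow 0$.

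\textbf{Main obstacle.} Everything except the third step is Cauchy--Schwarz bookkeeping. The genuinely delicate point is converting the \emph{uniform} differentiability hypothesis into \emph{uniform} convergence of $\Re\semiinner[V]{x}{y+tx}$; the telescoping identity above — which rewrites the awkward quotient $(\norm{y+2tx}_V-\norm{y+tx}_V)/t$ as a fixed linear combination of difference quotients to which the hypothesis applies directly — is exactly what lets the uniformity pass through.
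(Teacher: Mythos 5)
Your proof is correct. The paper itself does not prove this proposition (it is stated without proof, deferring to Lumer, Giles and Dragomir), and your argument is essentially the classical one from those references: the two-sided Cauchy--Schwarz estimate $\Re\semiinner[V]{u}{v}/\norm{v}_V\le(\norm{v+su}_V-\norm{v}_V)/s\le\Re\semiinner[V]{u}{v+su}/\norm{v+su}_V$ pins the derivative of the norm to $\Re\semiinner[V]{y}{x}/\norm{x}_V$, which gives uniqueness, and your telescoping rewriting of $(\norm{y+2tx}_V-\norm{y+tx}_V)/t$ is exactly the right device for letting the \emph{uniform} differentiability hypothesis deliver the \emph{uniform} convergence of $\Re\semiinner[V]{x}{y+tx}$.
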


The existence of a semi-inner product allows us to define a notion of orthogonality analogous to orthogonality in Hilbert spaces by requiring the semi-inner product to be zero. The lack of symmetry of the semi-inner product thus means that our notion of orthogonality is not symmetric in general and $x$ normal to $y$ does not imply that $y$ is normal to $x$.

\begin{definition}[Orthogonality]
 Let $V$ be a s.i.p.\ space. For $x,y\in V$ we say $x$ is normal to $y$ if $\semiinner[V]{y}{x}=0$.\\
 A vector $x\in V$ is normal to a subspace $U\subset V$ if $x$ is normal to all $y\in U$.
\end{definition}

Various generalisations of orthogonality have been developed which are equivalent conditions to the inner product being zero in a Hilbert space but generalise to normed linear spaces. One of these notions of orthogonality is James orthogonality~\cite{james1947}. The equivalence of James orthogonality with the inner product being zero in a Hilbert space generalises to smooth Banach spaces in which James orthogonality is equivalent to the unique semi-inner product being zero. James states that his definition is closely related to linear functionals and hyperplanes which is essential for our applications as we will see in the main part of the paper.

\begin{proposition}[James orthogonality]
  \label{prop:james_orthogonality}
  In a uniformly smooth s.i.p.\@ space  semi-inner product orthogonality is equivalent to James orthogonality, namely for $x,y\in V$
  $$\semiinner[V]{y}{x} = 0 \Leftrightarrow \norm{x+\lambda y}_V \geq \norm{x}_V \quad \mbox{ for all } \lambda\in\mathbb{F}$$
\end{proposition}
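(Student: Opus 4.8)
The plan is to deduce both implications directly from the formula for the differential of the norm established in the preceding proposition, the only extra ingredient being the convexity of the norm. First I would dispose of the degenerate case $x=0$: then $\semiinner[V]{y}{x}=\semiinner[V]{y}{0}=0$ by homogeneity in the second argument, while $\norm{0+\lambda y}_V\geq 0=\norm{x}_V$ for every $\lambda$, so both sides of the equivalence hold trivially. So assume $x\neq 0$ and set $\phi\colon\R\to\R$, $\phi(t)=\norm{x+ty}_V$. This $\phi$ is convex, being the norm composed with an affine map, and since $V$ is uniformly smooth its norm is uniformly Fr\'{e}chet differentiable, so the two-sided limit defining $\phi'(0)$ exists and, by the previous proposition, $\phi'(0)=\Re\semiinner[V]{y}{x}/\norm{x}_V$.

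For the implication ``$\semiinner[V]{y}{x}=0\Rightarrow\norm{x+\lambda y}_V\geq\norm{x}_V$'': if $\semiinner[V]{y}{x}=0$ then $\phi'(0)=0$, and since a stationary point of a convex function on $\R$ is a global minimiser we get $\norm{x+ty}_V\geq\norm{x}_V$ for all real $t$. For a general scalar I would write $\lambda=r\mu$ with $r\geq 0$ and $\modulus{\mu}=1$; by linearity in the first argument $\semiinner[V]{\mu y}{x}=\mu\semiinner[V]{y}{x}=0$, so applying the previous line with $\mu y$ in place of $y$ yields $\norm{x+\lambda y}_V=\norm{x+r(\mu y)}_V\geq\norm{x}_V$.

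For the converse: if $\norm{x+\lambda y}_V\geq\norm{x}_V$ for all scalars $\lambda$, then $\phi$ attains a global minimum at $t=0$, hence $\phi'(0)=0$ and $\Re\semiinner[V]{y}{x}=0$; in the real case we are done. In the complex case I would additionally test against $iy$: since $\lambda\mapsto i\lambda$ is a bijection of $\mathbb{C}$, the hypothesis also gives $\norm{x+\lambda(iy)}_V\geq\norm{x}_V$ for all $\lambda$, so $\Re\semiinner[V]{iy}{x}=0$; but $\semiinner[V]{iy}{x}=i\semiinner[V]{y}{x}$ by linearity in the first argument, whence $\operatorname{Im}\semiinner[V]{y}{x}=-\Re\bigl(i\semiinner[V]{y}{x}\bigr)=0$, and combined with the vanishing of the real part this gives $\semiinner[V]{y}{x}=0$.

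I do not anticipate a real obstacle: the entire content lies in the differential formula quoted above, and that is precisely where uniform smoothness is used, namely to guarantee that the limit exists as a genuine two-sided derivative. The only points requiring a little care are the degenerate case $x=0$, the reduction of arbitrary scalars to a real parameter by a rotation $\mu$, and --- in the complex setting --- recovering $\operatorname{Im}\semiinner[V]{y}{x}$ by testing against $iy$. One should also record carefully the elementary fact that a convex function $\phi\colon\R\to\R$ which is differentiable at a point with vanishing derivative attains its global minimum there, since that is the one place convexity enters.
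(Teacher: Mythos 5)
Your proof is correct. Note that the paper itself does not prove this proposition --- Section 2 states the semi-inner product results without proof, deferring to Lumer, Giles and James --- so the comparison is with the classical argument rather than with anything in the text. Your treatment of the converse (global minimum at $t=0$ of the convex map $t\mapsto\norm{x+ty}_V$ forces the derivative $\Re\semiinner[V]{y}{x}/\norm{x}_V$ to vanish, then testing against $iy$ to kill the imaginary part) is essentially the standard proof, and it is exactly where uniform smoothness is needed. For the forward implication, however, you are using more machinery than necessary: from $\semiinner[V]{y}{x}=0$ one gets directly, by linearity in the first argument and Cauchy--Schwarz,
$$\norm{x}_V^2=\semiinner[V]{x}{x}=\semiinner[V]{x+\lambda y}{x}\leq\norm{x+\lambda y}_V\,\norm{x}_V,$$
so $\norm{x+\lambda y}_V\geq\norm{x}_V$ for every scalar $\lambda$ at once, with no smoothness, no convexity argument, and no reduction $\lambda=r\mu$. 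This shows that only the implication from James orthogonality back to $\semiinner[V]{y}{x}=0$ genuinely uses the smoothness hypothesis, which is worth recording. Your handling of the degenerate case $x=0$ and the complex case is fine; the one hygiene point is that the differential formula you quote is stated in the paper for $x\neq0$ with the limit over real $t$, which is exactly how you use it, so no gap arises.
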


This relation to James orthogonality also helps to get a geometric understanding of what orthogonality means in a s.i.p.~space. From \cref{prop:james_orthogonality} it is immediately clear that $x$ being normal to $y$ means that the vector $y$ is tangent to the ball $B(0, \norm{x})$ at the point $x$, where $B(0,\norm{x})$ is the ball of radius $\norm{x}$ centred at the origin.\\
Having defined what it means to be orthogonal to a linear subspace we can also define the orthogonal complement of a subspace. It will become clear later that this definition coincides with the usual definition of orthogonal complements in Banach spaces via the dual space.

\begin{definition}[Orthogonal Complement]
  Let $V$ be a s.i.p.~space and $U$ a closed linear subspace. Then the orthogonal complement of $U$ is defined to be
  $$U^\perp = \left\{x_\perp\in V \setseparator \semiinner[V]{x}{x_\perp}=0\,\forall x\in U\right\}$$
\end{definition}

If the space is a uniformly convex Banach Space it is not difficult to see that there is a unique orthogonal decomposition for every $x\in V$. This is because it is known that in a uniformly convex space there is a unique closest point in a closed linear subspace and one easily checks that this immediately leads to a unique orthogonal decomposition.

\begin{proposition}[Orthogonal Decomposition]
  Let $V$ be a uniformly convex s.i.p.~space. Then for any closed linear subspace $U\subset V$ there exists a unique orthogonal decomposition, more precisely for any $x\in V$ there exists a unique $x_0\in U$ and a unique $x_\perp\in U^\perp$ such that $x=x_0+x_\perp$.
\end{proposition}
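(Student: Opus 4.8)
The plan is to obtain the decomposition from the metric projection onto $U$ and to read off both existence and uniqueness from James orthogonality (\cref{prop:james_orthogonality}). Fix $x\in V$. Since $V$ is uniformly convex (and complete), and $U$ is closed, the metric projection onto $U$ is well defined: there is a \emph{unique} $x_0\in U$ with $\norm{x-x_0}_V=\operatorname{dist}(x,U):=\inf_{u\in U}\norm{x-u}_V$. This is the classical fact recalled just before the statement; if one wants it self-contained, existence follows by taking a minimising sequence, which uniform convexity forces to be Cauchy and hence convergent, while uniqueness follows from strict convexity of the norm. Put $x_\perp:=x-x_0$, so that $x=x_0+x_\perp$ with $x_0\in U$.

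Next I would check $x_\perp\in U^\perp$. For every $u\in U$ and every scalar $\lambda$ we have $x_0-\lambda u\in U$, so minimality of $x_0$ gives
$$\norm{x_\perp+\lambda u}_V=\norm{x-(x_0-\lambda u)}_V\ \geq\ \operatorname{dist}(x,U)\ =\ \norm{x-x_0}_V\ =\ \norm{x_\perp}_V .$$
As this holds for all $\lambda$, \cref{prop:james_orthogonality} (this is where uniform smoothness of the norm is used) yields $\semiinner[V]{u}{x_\perp}=0$, and since $u\in U$ was arbitrary, $x_\perp\in U^\perp$. This establishes existence of the decomposition.

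For uniqueness, suppose also $x=x_0'+x_\perp'$ with $x_0'\in U$ and $x_\perp'\in U^\perp$. For any $u\in U$ we have $x_0'-u\in U$, hence $\semiinner[V]{x_0'-u}{x_\perp'}=0$; the elementary implication of James orthogonality --- which follows straight from Cauchy--Schwarz, namely $\norm{w}_V^2=\semiinner[V]{w+\mu v}{w}\leq\norm{w+\mu v}_V\norm{w}_V$ whenever $\semiinner[V]{v}{w}=0$, and so holds in \emph{every} s.i.p.\ space --- then gives
$$\norm{x-u}_V=\norm{x_\perp'+(x_0'-u)}_V\ \geq\ \norm{x_\perp'}_V\ =\ \norm{x-x_0'}_V .$$
Thus $x_0'$ is a point of $U$ nearest to $x$, so by uniqueness of the metric projection $x_0'=x_0$, and therefore $x_\perp'=x-x_0'=x-x_0=x_\perp$.

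The only ingredient that is more than bookkeeping is the well-posedness of the metric projection onto a closed subspace of a uniformly convex Banach space, which is exactly the standard fact the author invokes. The step worth highlighting is that the passage from ``$x_0$ is the nearest point'' to ``$x_\perp\in U^\perp$'' is not merely Birkhoff--James orthogonality but genuine s.i.p.\ orthogonality, and this is precisely where uniform smoothness enters via \cref{prop:james_orthogonality}; the converse passage used in the uniqueness part, by contrast, needs only the Cauchy--Schwarz inequality.
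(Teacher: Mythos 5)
Your proof is correct and follows essentially the same route the paper sketches (it gives no detailed proof, only the remark that the unique nearest point in a closed subspace of a uniformly convex space "immediately leads" to the decomposition): metric projection for existence and uniqueness of $x_0$, \cref{prop:james_orthogonality} to pass from the nearest-point property to $\semiinner[V]{u}{x_\perp}=0$, and Cauchy--Schwarz for the converse in the uniqueness step. Your observation that the existence step genuinely needs smoothness of the norm is apt --- the proposition's hypothesis as stated omits it, but it is supplied by the uniform Banach space setting in which the paper works from this point on.
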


Under these assumptions
we are also able to establish a Riesz representation theorem using the semi-inner product.

\begin{theorem}[Riesz representation theorem]
 Let $V$ be a uniformly convex, uniformly smooth s.i.p.\ space. Then for every $f\in V^\ast$, the continuous dual space of $V$, there exists a unique vector $y\in V$ such that
 $$f(x)=\semiinner[V]{x}{y} \quad \mbox{ for all } x\in V$$
 Furthermore
 $$\norm{y}_V = \norm{f}_{V^\ast}$$
\end{theorem}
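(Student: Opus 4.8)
The plan is to follow the classical Hilbert-space argument, substituting the orthogonal decomposition proposition (which is where uniform convexity enters) for the usual orthogonal projection onto the kernel.

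First I would dispose of $f=0$ by taking $y=0$. For $f\neq 0$, the kernel $N:=\ker f$ is a closed, proper subspace, so I can choose $z\in V\setminus N$ and apply the orthogonal decomposition proposition to write $z=z_0+z_\perp$ with $z_0\in N$ and $z_\perp\in N^\perp$; since $z\notin N$ one has $z_\perp\neq 0$ and $f(z_\perp)=f(z)-f(z_0)=f(z)\neq 0$. Now for an arbitrary $x\in V$ the vector $w:=x-\tfrac{f(x)}{f(z_\perp)}z_\perp$ lies in $N$, so $\semiinner[V]{w}{z_\perp}=0$; expanding via linearity in the first argument gives
$$\semiinner[V]{x}{z_\perp}=\frac{f(x)}{f(z_\perp)}\,\norm{z_\perp}_V^2,$$
and hence $f(x)=\dfrac{f(z_\perp)}{\norm{z_\perp}_V^2}\,\semiinner[V]{x}{z_\perp}$. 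Pulling the (now known) scalar into the second slot using conjugate homogeneity, I set $y:=\dfrac{\overline{f(z_\perp)}}{\norm{z_\perp}_V^2}\,z_\perp$ (the conjugate being vacuous in the real case) and obtain $f(x)=\semiinner[V]{x}{y}$ for every $x$.

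For the norm identity, Cauchy--Schwarz gives $\norm{f}_{V^\ast}\le\norm{y}_V$ immediately, while evaluating $f$ at $y/\norm{y}_V$ (when $y\neq 0$; the case $y=0$ is trivial) yields $f(y/\norm{y}_V)=\norm{y}_V$, so $\norm{f}_{V^\ast}=\norm{y}_V$. For uniqueness, if $y_1,y_2$ both represent $f$ then applying Cauchy--Schwarz to $\norm{y_1}_V^2=f(y_1)=\semiinner[V]{y_1}{y_2}$ and symmetrically forces $\norm{y_1}_V=\norm{y_2}_V=:r$; if $r=0$ both vanish, and if $r>0$ then $f$ attains its norm at both unit vectors $y_1/r$ and $y_2/r$, so strict convexity of the unit ball (a consequence of uniform convexity, via the midpoint $\tfrac{1}{2}(y_1/r+y_2/r)$) forces $y_1=y_2$.

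I expect the existence step to be essentially routine once the orthogonal decomposition is invoked; the only parts needing a little care are checking that $z_\perp\notin N$ so that $f(z_\perp)\neq 0$, and the uniqueness argument, where one must remember that uniform convexity (not mere smoothness) is what guarantees a norm-attaining functional is extremal at a single point of the sphere.
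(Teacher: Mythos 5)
Your argument is correct. Note that the paper itself gives no proof of this theorem — it is stated as background and deferred to the cited sources (Lumer, Giles, Dragomir) — and your proof is essentially the classical one found there: split off the kernel $N=\ker f$, use the orthogonal decomposition proposition (where uniform convexity, together with the James-orthogonality equivalence underlying that proposition, does the real work) to produce $z_\perp\in N^\perp$ with $f(z_\perp)\neq 0$, and rescale using only homogeneity in the second slot, which is exactly the property the s.i.p.\ retains when additivity is lost. The two delicate points are handled properly: the representer formula never needs additivity in the second argument, and uniqueness cannot be obtained by subtracting representers (again for lack of additivity), so your strict-convexity argument at the norm-attaining unit vectors is the right substitute.
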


This theorem is crucial for the development of the theory in this paper as it means that the duality map $x\mapsto x^\ast$ given by
$$x^\ast(y) = \semiinner[V]{y}{x}\quad\forall y\in V$$
is an isometric isomorphism from $V$ to $V^\ast$. It is essential to note that this map is linear if and only if $V$ is a Hilbert space.\\
\ \\
Summarizing the above results we see that a necessary structure to have a unique semi-inner product inducing the norm and allowing for a Riesz representation theorem is that the space is a uniformly convex and uniformly Fr\'{e}chet differentiable Banach space. For simplicity we will be calling such spaces uniform.

\begin{definition}[Uniform Banach space]
 We say a space $V$ is uniform if it is a uniformly convex and uniformly Fr\'{e}chet differentiable Banach space.\\
\end{definition}

For the remainder of the paper we will only be working with uniform Banach spaces and throughout denote them by $\banachspace$.\\
Note that any Banach space that is uniformly convex or uniformly Fr\'{e}chet differentiable is reflexive. Further a Banach space is uniformly Fr\'{e}chet differentiable if and only if its dual space is uniformly convex. Thus for a uniform Banach space $\banachspace$ its dual space $\banachspace^\ast$ is also uniform and its norm-inducing semi-inner product is given by
$$\semiinner[\banachspace^\ast]{x^\ast}{y^\ast} = \semiinner[\banachspace]{y}{x}$$
We already know that the duality map is a homogeneous isometric isomorphism. Lastly we note that in fact it is also norm-to-norm continuous.The proof for this is standard and can be found in the appendix.
\begin{proposition}
  \label{prop:dual_map_cts}
  The duality map $\ast:\banachspace\rightarrow\banachspace^\ast, x\mapsto x^\ast$ is norm-to-norm continuous.\\
  In particular this shows that in fact \cref{eq:weak_continuity_property} can be strengthened to
  $$\semiinner{z}{x+ty} \rightarrow \semiinner{z}{x}$$
  for all $x,y,z\in\banachspace$ and $t\in\C$.\\
\end{proposition}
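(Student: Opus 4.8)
The plan is to run the classical argument that, whenever the dual space is uniformly convex, the normalised duality map is norm-to-norm continuous, carried out through the semi-inner product representation. First I would reduce to proving continuity at a single vector $x$ with $\norm{x}_\banachspace=1$. For $x=0$ continuity is immediate since $\ast$ is an isometry ($\norm{x_n^\ast}_{\banachspace^\ast}=\norm{x_n}_\banachspace\to 0$). For $x\neq 0$, writing $x_n=\norm{x_n}_\banachspace u_n$ with $u_n=x_n/\norm{x_n}_\banachspace\to x/\norm{x}_\banachspace=:u$ and using $(\lambda v)^\ast=\lambda v^\ast$ for real $\lambda>0$ (immediate from conjugate homogeneity in the second slot), one gets $x_n^\ast=\norm{x_n}_\banachspace u_n^\ast$, so convergence $u_n^\ast\to u^\ast$ on the unit sphere together with $\norm{x_n}_\banachspace\to\norm{x}_\banachspace$ gives $x_n^\ast\to\norm{x}_\banachspace u^\ast=x^\ast$. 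Hence fix $x$ with $\norm{x}_\banachspace=1$ and $x_n\to x$; after this reduction we may also assume $\norm{x_n}_\banachspace=1$ for all $n$.

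The heart of the matter is an estimate on $\norm{x_n^\ast+x^\ast}_{\banachspace^\ast}$. Evaluating that functional at $x_n$ and using $x_n^\ast(x_n)=\semiinner{x_n}{x_n}=\norm{x_n}_\banachspace^2=1$ gives
$$\norm{x_n^\ast+x^\ast}_{\banachspace^\ast}\;\geq\;(x_n^\ast+x^\ast)(x_n)\;=\;1+x^\ast(x_n)\;=\;1+\semiinner{x_n}{x}.$$
Since $x^\ast$ is a fixed element of $\banachspace^\ast$ and $x_n\to x$ in norm, $\semiinner{x_n}{x}=x^\ast(x_n)\to x^\ast(x)=\semiinner{x}{x}=1$; combined with the upper bound $\norm{x_n^\ast+x^\ast}_{\banachspace^\ast}\leq\norm{x_n^\ast}_{\banachspace^\ast}+\norm{x^\ast}_{\banachspace^\ast}=\norm{x_n}_\banachspace+\norm{x}_\banachspace=2$ coming from the isometry, this forces $\norm{x_n^\ast+x^\ast}_{\banachspace^\ast}\to 2$, while $\norm{x_n^\ast}_{\banachspace^\ast}=\norm{x^\ast}_{\banachspace^\ast}=1$.

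Now I would invoke the uniform convexity of $\banachspace^\ast$, which holds precisely because $\banachspace$ is uniformly Fr\'{e}chet differentiable: a pair of unit vectors whose average has norm approaching $1$ must have difference approaching $0$. Applied to the unit vectors $x_n^\ast$ and $x^\ast$ this yields $\norm{x_n^\ast-x^\ast}_{\banachspace^\ast}\to 0$, i.e.\ norm-to-norm continuity of $\ast$. For the strengthening of \cref{eq:weak_continuity_property}: fix $x,y,z\in\banachspace$; as $t\to 0$ we have $x+ty\to x$ in $\banachspace$, hence $(x+ty)^\ast\to x^\ast$ in $\banachspace^\ast$ by what was just proved, and therefore $\semiinner{z}{x+ty}=(x+ty)^\ast(z)\to x^\ast(z)=\semiinner{z}{x}$, since $\modulus{(x+ty)^\ast(z)-x^\ast(z)}\leq\norm{(x+ty)^\ast-x^\ast}_{\banachspace^\ast}\norm{z}_\banachspace$.

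I do not anticipate a real obstacle: given the tools already assembled the argument is elementary. The only points needing a little care are the homogeneity reduction to the unit sphere and the standard fact that in a uniformly convex space $\norm{u_n+u}_{\banachspace^\ast}\to 2$ with $\norm{u_n}_{\banachspace^\ast}=\norm{u}_{\banachspace^\ast}=1$ forces $\norm{u_n-u}_{\banachspace^\ast}\to 0$ --- and the latter is a direct reading of the definition of uniform convexity once one has arranged all the vectors in question to be unit vectors, which the normalisation above does.
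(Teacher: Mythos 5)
Your argument is correct, but it takes a genuinely different route from the paper. You prove norm-to-norm continuity directly: after a positive-homogeneity reduction to unit vectors, you evaluate $x_n^\ast+x^\ast$ at $x_n$ to get $\norm{x_n^\ast+x^\ast}_{\banachspace^\ast}\geq 1+x^\ast(x_n)\rightarrow 2$, and then read off $\norm{x_n^\ast-x^\ast}_{\banachspace^\ast}\rightarrow 0$ from the uniform convexity of $\banachspace^\ast$ (equivalent to uniform smoothness of $\banachspace$). The paper instead proceeds in two stages: it first establishes norm-to-weak continuity by extracting a weak$\ast$ convergent subsequence of the bounded sequence $(x_n^\ast)$ (using reflexivity), identifying its limit as $x^\ast$ via the properties $\overline{x}^\ast(x)=\norm{x}^2$ and $\norm{\overline{x}^\ast}\leq\norm{x}$ together with strict convexity of the dual, and a subsequence-of-a-subsequence argument; it then upgrades weak to norm convergence with the Radon--Riesz type result (Brezis, Proposition 3.32), which again rests on uniform convexity of $\banachspace^\ast$. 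Both proofs ultimately exploit the same geometric fact about $\banachspace^\ast$, but yours is more self-contained and elementary, avoiding weak$\ast$ compactness, reflexivity and the cited propositions, while the paper's version leans on standard functional-analytic machinery and makes the weak continuity statement explicit along the way. Your deduction of the strengthened convergence $\semiinner{z}{x+ty}\rightarrow\semiinner{z}{x}$ from norm convergence of the dual elements matches the paper's intent and is fine; the only points to keep an eye on are the ones you flagged yourself (the scaling reduction, which works because the duality map is positively homogeneous, and the contrapositive reading of uniform convexity), and both are handled correctly.
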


Thus the dual map is a homeomorphism from $\banachspace$ to $\banachspace^\ast$ with the norm topologies.

\section{Existence of Representer Theorems}
\label{sec:representer-theorem}
The definitions and results of the previous section allow us to consider the regularised interpolation problem
\begin{equation}
  \label{eq:regularised_interpolation_semiinner}
  \min\left\{\Omega(f)\setseparator f\in\banachspace,\semiinner[\banachspace]{f}{x_i} = y_i\,\forall i \in\N_m\right\}
\end{equation}
where the domain $\banachspace$ of the interpolation problem is a real uniform Banach space. This generalises the setting considered by Argyriou, Micchelli and Pontil in~\cite{argyriou2009} where the case of a Hilbert space domain is considered. In that setting the linear representer theorem states that there exists a solution to the interpolation problem which is in the linear span of the data points. Our work, similarly as~\cite{miccelli2004}, hints that in its essence the representer theorem is a result about the dual space rather than the space itself. Since in a Hilbert space the dual element is the element itself this doesn't become apparent in this setting and we obtain a result in the space itself. As the duality map is nonlinear for any Banach space which is not Hilbert we need to adjust the formulation of the representer theorem. Namely the linear representer theorem in a uniform Banach space states that there exists a solution such that its dual element is in the linear span of the dual elements of the data points. This is made precise in the following definition which is the analogue of Argyriou, Micchelli and Pontil calling regularisers which always admit a linear representer theorem admissible.

\begin{definition}[Admissible Regulariser]
  We say a function $\Omega:\banachspace\rightarrow\R$ is admissible if for any $m\in\N$ and any given data $\left\{(x_i,y_i) \setseparator i\in\N_m\right\}\subset \banachspace\times Y$ such that the interpolation constraints can be satisfied the regularised interpolation problem \cref{eq:regularised_interpolation_semiinner} admits a solution $f_0$ such that its dual element is of the form
  $$f^\ast_0=\sum\limits_{i=1}^{m} c_{i}x^\ast_i$$
\end{definition}

With this definition at hand it is now our goal to classify all admissible regularisers. It is well known that being a non-decreasing function of the norm on a Hilbert space is a sufficient condition for the regulariser to be admissible. By a Hahn-Banach argument similar as e.g.\ in Zhang, Zhang~\cite{zhang2012} this generalises to our case of uniform Banach spaces. Below we show that this condition is already almost necessary in the sense that admissible regularisers cannot be very far from being radially symmetric.

\begin{theorem}
\label{thm:admissible_regulariser}
  A function $\Omega$ is admissible if and only if it is of the form
  $$\Omega(f) = h(\semiinner[\banachspace]{f}{f})$$
  for some non-decreasing $h$ whenever $\norm{f} \neq r$ for $r\in\mathcal{R}$. Here $\mathcal{R}$ is an at most countable set of radii where $h$ has a jump discontinuity. For any $f$ with $\norm{f}=r\in\mathcal{R}$ the value $\Omega(f)$ is only constrained by the monotonicity property, i.e.\ it has to lie in between $\lim\limits_{t\nearrow r}h(t)$ and $\lim\limits_{t\searrow r}h(t)$.\\
  \ \\
  In other words, $\Omega$ is radially non-decreasing and radially symmetric except for at most countably many circular jump discontinuities. In those discontinuities the function value is only limited by its monotonicity property.
\end{theorem}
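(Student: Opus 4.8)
The plan is to prove the two directions of the equivalence separately, with the bulk of the work in the necessity direction. For sufficiency, suppose $\Omega(f) = h(\norm{f}_\banachspace)$ for some non-decreasing $h$ (the subtlety about jump radii in $\mathcal{R}$ only affects which values $\Omega$ may take on a sphere, and does not change the argument). Given data for which the constraints are feasible, the feasible set is a closed affine subspace; picking the feasible $f_0$ of minimal norm (which exists and is unique by uniform convexity, exactly as in the Orthogonal Decomposition proposition), I would show $f_0^\ast \in \vecspan\{x_1^\ast,\dots,x_m^\ast\}$ via a James-orthogonality/Hahn--Banach argument: if $f_0^\ast$ had a component orthogonal to that span, one could perturb $f_0$ within the feasible affine subspace to strictly decrease the norm, contradicting minimality. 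Since $h$ is non-decreasing, this norm-minimiser is also an $\Omega$-minimiser, so $\Omega$ is admissible.

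For necessity, assume $\Omega$ is admissible. The key idea is to extract, from the admissibility requirement applied to cleverly chosen small data sets (really just $m=1$ or $m=2$ points), enough constraints to force $\Omega$ to be radially monotone and radially symmetric off a countable set. I expect the two "essential lemmas" promised in the excerpt to do precisely this, and I would mirror that structure. First, using a single data point $x$ with constraint $\semiinner{f}{x} = y$: the feasible set is the hyperplane $\{f : f^\ast_x(f) = y\}$ (writing $f^\ast_x$ for the functional $\semiinner{\cdot}{x}$), and admissibility says some minimiser $f_0$ has $f_0^\ast = c\, x^\ast$. By the Riesz representation theorem this pins down $f_0$ as the (unique) James-orthogonal "foot of the perpendicular" from $0$ onto that hyperplane — i.e. the norm-minimiser on the hyperplane. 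So on every such hyperplane, $\Omega$ attains its minimum at the point of smallest norm. Letting the hyperplane vary over all hyperplanes tangent to a fixed sphere of radius $r$, this says $\Omega(f_0) \le \Omega(f)$ whenever $\norm{f_0} = r \le \norm{f}$, which already gives radial monotonicity. For radial symmetry one compares two points $f_1, f_2$ on the same sphere: one must produce a hyperplane (equivalently, a functional, equivalently via Riesz a direction) that is simultaneously "tangent" to the sphere at a point whose norm relates both, or use a two-point constraint set, to force $\Omega(f_1) = \Omega(f_2)$ unless $r$ is one of the exceptional radii.

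The main obstacle, and the place I'd expect the real geometry of uniform Banach spaces to enter, is controlling the exceptional set $\mathcal{R}$ and handling the failure of radial symmetry exactly on spheres where $h$ jumps. Radial monotonicity gives a non-decreasing function $h(r) = \inf\{\Omega(f) : \norm{f} = r\}$; the content is to show $\Omega(f) = h(\norm{f})$ for all $f$ with $\norm{f} \notin \mathcal{R}$. The argument should be: if $\Omega(f_1) > h(\norm{f_1})$ strictly at some point where $h$ is continuous, then $\Omega(f_1)$ exceeds $h$ on a whole neighbourhood of radii, and one can build a hyperplane through $f_1$ whose norm-minimiser $f_0$ has slightly smaller norm, hence $\Omega(f_0) \le h(\norm{f_0}) + \varepsilon < \Omega(f_1)$, so $f_1$ is not a minimiser on that hyperplane — but admissibility forced the minimiser on that hyperplane to be its norm-minimiser, contradiction. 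This pins $\Omega$ down wherever $h$ is continuous; since a monotone function has at most countably many discontinuities, $\mathcal{R}$ is at most countable, and on those spheres the only surviving constraint is that $\Omega$ be squeezed between the one-sided limits $\lim_{t\nearrow r} h(t)$ and $\lim_{t\searrow r} h(t)$, which is exactly what monotonicity of the extension demands. I would need to be careful that "the minimiser on the hyperplane is unique and equals the norm-minimiser" genuinely follows from admissibility plus uniform convexity/smoothness (uniqueness of the Riesz representer and of the James-orthogonal projection), since the whole necessity argument leans on that rigidity.
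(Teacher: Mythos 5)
Your sufficiency argument and the first step of your necessity argument are sound and essentially coincide with the paper's \cref{lma:admissibility}: a single data point $(x,y)$ makes the feasible set the hyperplane $\{f\setseparator\semiinner{f}{x}=y\}$, admissibility plus homogeneity of the duality map places a minimiser at the tangent point $\tfrac{y}{\norm{x}^2}x$, and this yields $\Omega(f)\leq\Omega(f+f_T)$ whenever $\semiinner{f_T}{f}=0$. The genuine gap is the sentence ``letting the hyperplane vary over all hyperplanes tangent to a fixed sphere of radius $r$, this says $\Omega(f_0)\le\Omega(f)$ whenever $\norm{f_0}=r\le\norm{f}$''. Varying the hyperplane also varies its point of tangency: for a given $f$ with $\norm{f}>r$, each tangent hyperplane through $f$ touches the sphere at some point $f_0'$ depending on that hyperplane, so what you actually obtain is only $\inf\{\Omega(g)\setseparator\norm{g}=r\}\leq\Omega(f)$, not the bound for the fixed $f_0$ you started with. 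The pointwise bound is precisely the paper's \cref{lma:circular_bound}, and it is the hard geometric core of the whole theorem: the paper proves it by (i) an ``out along one tangent, back along another'' intermediate-value argument, which needs norm-to-norm continuity of the duality map, to reach the ray and half space determined by $f_0$, and (ii) a walk around the sphere in small tangential steps, where uniform smoothness, via $\rho_\banachspace(\delta)/\delta\rightarrow 0$, guarantees the drift away from the sphere is of lower order than the step. Neither idea appears in your proposal, and the infimum bound cannot substitute for the pointwise one.

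This gap propagates into your final step. With $h(r)=\inf\{\Omega(f)\setseparator\norm{f}=r\}$, the inequality ``$\Omega(f_0)\le h(\norm{f_0})+\varepsilon$'' for the norm-minimiser $f_0$ of a hyperplane through $f_1$ is unjustified: $f_0$ is a specific point dictated by the hyperplane, and there is no reason its value is within $\varepsilon$ of the spherical infimum unless you already know (near-)radial symmetry, which is what you are trying to prove; as written the argument is circular. Similarly, the proposal to ``force $\Omega(f_1)=\Omega(f_2)$'' for two points on one sphere by a suitable hyperplane or two-point data set cannot work directly, since the theorem explicitly allows $\Omega(f_1)\neq\Omega(f_2)$ on the exceptional spheres; the symmetry must be obtained indirectly. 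The paper does this by radially mollifying $\Omega$ (mollification preserves tangential monotonicity, again via \cref{lma:circular_bound}), showing a radially continuous admissible regulariser is radially symmetric, and comparing the monotone radial profiles along different rays with their common mollification, which is exactly where the at most countable set $\mathcal{R}$ of jump radii arises. If you first establish the pointwise circular bound, your continuity-point argument can be repaired (compare $f_1$ with a near-minimiser of $\Omega$ on a slightly larger sphere), but as it stands the proposal assumes the key lemma rather than proving it.
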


In~\cite{argyriou2009} Argyriou, Micchelli and Pontil show that any admissible regulariser on a Hilbert space is non-decreasing in orthogonal directions. An analogous result is true for uniform Banach spaces but with orthogonality not being symmetric and our intuition gained from the equivalence with James orthogonality we see that in fact it is tangential directions in which the regulariser is non-decreasing. This also becomes clear from the proves in~\cite{argyriou2009}, in particular when proving radial symmetry.\\
Before we can prove the analogous result for uniform Banach spaces we need to show that we can extend this tangential bound considerably and a function that is non-decreasing in tangential directions is in fact non-decreasing in norm as is made precise in the following \namecref{lma:circular_bound}.

\begin{lemma}
\label{lma:circular_bound}
  If $\Omega(f)\leq\Omega(f+f_T)$ for all $f,f_T\in\banachspace$ such that $\semiinner{f_T}{f}=0$ then for any fixed $\hat{f}$ we have that $\Omega(\hat{f})\leq\Omega(f)$ for all $f$ such that $\norm{\hat{f}}<\norm{f}$.
\end{lemma}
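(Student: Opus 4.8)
The plan is to reach an arbitrary target $f$ with $\norm{f} > \norm{\hat{f}}$ from $\hat{f}$ by a short chain of moves, each of which is either a tangential step (to which the hypothesis applies directly) or a move that can be approximated by tangential steps. The key geometric fact to exploit is \cref{prop:james_orthogonality}: $\semiinner{f_T}{f} = 0$ precisely when $\norm{f + \lambda f_T} \geq \norm{f}$ for all scalars $\lambda$, i.e. $f_T$ is tangent to the sphere of radius $\norm{f}$ at $f$. So a single tangential step from $f$ lands on a point of norm \emph{at least} $\norm{f}$, and by moving along the tangent direction and renormalising one can land on a point of norm exactly as large as one likes. The first step is therefore to show that from any $g$ one can reach, via one tangential step, some point $g'$ with $\norm{g'}$ equal to any prescribed value $\geq \norm{g}$ and $g'$ in any prescribed "direction" that is not too far from $g$; iterating, one can rotate around a sphere.

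Concretely, I would proceed as follows. First, fix $\hat f$ and a target radius $R > \norm{\hat f}$; it suffices to show $\Omega(\hat f) \le \Omega(g)$ for every $g$ with $\norm g = R$, since the hypothesis already gives $\Omega(g) \le \Omega(g + g_T)$ allows us to pass from the sphere of radius $R$ outward. Wait — more carefully: given arbitrary $f$ with $\norm f > \norm{\hat f}$, set $R = \norm f$. Step one: from $\hat f$ take the tangential direction $f_T := f - \hat f$ renormalised appropriately — but $f - \hat f$ need not be tangent to the sphere at $\hat f$. Instead, use James orthogonality to choose a genuine tangent vector $v$ at $\hat f$ (i.e. $\semiinner{v}{\hat f} = 0$), and observe that $\hat f + \lambda v$ has norm $\to \infty$ as $\lambda \to \infty$ while its norm is $\geq \norm{\hat f}$ throughout; by continuity of the norm, for suitable $\lambda$ we get $\norm{\hat f + \lambda v} = R$. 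So $\Omega(\hat f) \leq \Omega(\hat f + \lambda v)$, and we have landed on the sphere of radius $R$. Step two: show that the hypothesis lets us move \emph{within} the sphere of radius $R$ — from any $g$ with $\norm g = R$ to any $g'$ with $\norm{g'} = R$. This is the crux: a tangent step from $g$ overshoots to norm $\ge R$, so one must argue that the set of points on the sphere of radius $R$ reachable from $g$ (via tangential steps composed with the "overshoot-then-the-hypothesis-again" trick) is both open and closed in the sphere, hence all of it by connectedness (the sphere is connected since $\banachspace$ is infinite-dimensional, or more elementarily via path-connectedness of $\banachspace \setminus \{0\}$ and radial projection). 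Combining, $\Omega(\hat f) \le \Omega(g)$ for every $g$ of norm $R$, in particular for $g = f$.

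The step I expect to be the main obstacle is step two — propagating the bound \emph{along} a sphere rather than just outward. The difficulty is that a single application of the hypothesis from a point $g$ of norm $R$ never stays on that sphere; it jumps to norm $\geq R$. The resolution I would pursue: from $g$ of norm $R$, pick a tangent vector $v$ at $g$, move to $g + \lambda v$ of norm $R' \ge R$, then from $g+\lambda v$ pick a tangent vector pointing "back down and across" so as to reach a new point $g''$ of norm $R$ that differs from $g$ by a genuine rotation; the key subtlety is controlling that one really can return to norm exactly $R$ and that the reachable set is open, which uses the norm-to-norm continuity of the duality map (\cref{prop:dual_map_cts}) so that tangent directions vary continuously with the base point. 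An alternative cleaner route, which I would try first, is to avoid rotating on a fixed sphere: given $f$ with $\norm f = R > \norm{\hat f}$, directly exhibit \emph{one} tangent vector $v$ at $\hat f$ with $\hat f + v$ "close to" the ray through $f$, apply the hypothesis, and then use a further tangential correction at $\hat f + v$; if $\hat f$ and $f$ are linearly independent this two-step manoeuvre in the two-dimensional subspace $\vecspan\{\hat f, f\}$ should suffice, and the collinear case $f = t\hat f$ with $t>1$ is handled by a limiting argument using nearby non-collinear targets together with continuity of $\Omega$ — though note $\Omega$ is not assumed continuous, so in the collinear case I would instead perturb: pick $g$ non-collinear with $\norm g = R$, get $\Omega(\hat f) \le \Omega(g)$, then get from $g$ to $t\hat f$ by one more tangential step, landing possibly at norm $> R$, and iterate the outward bound — here one genuinely needs the "sphere-filling" argument, so step two cannot be fully circumvented. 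I would therefore present the connectedness argument as the backbone, with the infinite-dimensionality of $\banachspace$ (guaranteed implicitly, or else the finite-dimensional case handled separately) ensuring the relevant spheres are connected.
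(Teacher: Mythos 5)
There is a genuine gap, and it sits exactly where you predicted: step two. The fatal point is that tangential steps can never decrease the norm: by \cref{prop:james_orthogonality} and strict convexity, $\semiinner{v}{g}=0$ and $\lambda\neq 0$ imply $\norm{g+\lambda v}>\norm{g}$. Hence along any chain of tangential steps the norm is strictly increasing, so once a chain has left the sphere of radius $R$ it can never return to it, and from a point $g$ with $\norm{g}=R$ the subset of that sphere reachable by tangential chains is just $\{g\}$. Your proposed move ``from $g+\lambda v$ pick a tangent vector pointing back down and across so as to reach a new point $g''$ of norm $R$'' is therefore impossible, and the open/closed connectedness argument has nothing to propagate. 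The correct resolution (and the paper's) is to exploit the slack $\norm{f}-\norm{\hat f}>0$: circle around at radii only slightly above $\norm{\hat f}$, strictly below $\norm{f}$, using many small tangential steps, and control the per-step norm growth quantitatively by the modulus of smoothness: a tangential step of relative length $\delta$ from a unit vector raises the norm by at most $2\rho_\banachspace(\delta)$, and uniform smoothness ($\rho_\banachspace(\delta)/\delta\to 0$, \cref{rmk:uniform_smoothness}) makes the total increase over a full turn arbitrarily small. Your proposal never invokes uniform smoothness at all; James orthogonality plus continuity of the duality map do not by themselves rule out that the norm creeps up too fast as you rotate, so the ``sphere-filling'' step is not merely unfinished, it is missing its essential ingredient.

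A second missing piece is the mechanism for landing \emph{exactly} on the target. The paper gets this from a two-step ``out and back'' argument: move out along the tangent at $\hat f$ to $f_t=\hat f+t f_T$ and apply the intermediate value theorem to $t\mapsto\semiinner{f-f_t}{f_t}$, which is continuous by \cref{prop:dual_map_cts}, positive for small $t$ and negative for large $t$; the resulting $t_0$ gives a tangential second step landing exactly at $f$. This yields the bound on the whole ray through $\hat f$ and hence on the closed half space beyond the tangent at $\hat f$, and it is this half-space bound, applied at the end of the circling chain, that captures the target. Your step one only lands at some uncontrolled point of the sphere of radius $R$ (on its intersection with the affine tangent hyperplane at $\hat f$), and your fallback claim that a two-step manoeuvre in $\vecspan\{\hat f,f\}$ suffices for linearly independent targets is false: in the Euclidean plane with $\hat f=(1,0)$, any point $q$ reachable in two tangential steps via $(1,t)$ satisfies $q_1+tq_2=1+t^2$, so if $\modulus{q_2}\leq\varepsilon$ then $q_1\geq 1-\varepsilon^2/4>0$; thus a target such as $(-1.1,\,0.01)$, of norm greater than $1$ and independent of $\hat f$, is unreachable in two steps. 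Nearly antipodal targets genuinely require many small steps, which is precisely why the modulus-of-smoothness circling argument is the heart of the proof rather than an optional refinement.
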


\begin{proof}
  \begin{subproof}[Bound $\Omega$ on the half space given by the tangent through $\hat{f}$]\label{spf:half_space_bound}
    We start by showing that $\Omega$ is radially non-decreasing. Since it is non-decreasing along tangential directions this immediately gives the claimed bound for the entire half space given by the tangent through $\hat{f}$. The idea of the proof is to move out along a tangent until we can move back along another tangent to hit a given point along the ray $\lambda\cdot\hat{f}$ as shown in \cref{fig:radially_increasing}.
    \begin{figure}[h]
      \centering
      \includegraphics[width=.65\linewidth]{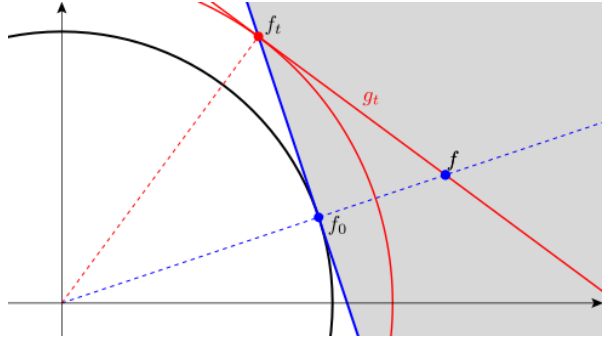}
      \caption{\footnotesize We can extend the tangential bound to the ray $\lambda\cdot f_0$ by finding the point $f_t$ along the tangent from where the tangent to $f_t$ hits the desired point on the ray. Via the tangents to points along the ray the bound then extends to the shaded half space.}
      \label{fig:radially_increasing}
    \end{figure}\\
    Fix some $\hat{f}\in\banachspace$ and $1<\lambda\in\R$ and set $f=\lambda\cdot \hat{f}$. We need to show that $\Omega(f)\geq\Omega(\hat{f})$. Let $f_T\in\banachspace$ be such that $\semiinner{f_T}{\hat{f}}=0$ or equivalently $\norm{\hat{f}+t\cdot f_T}>\norm{\hat{f}}$ for all $t\neq0$. Now let
    \begin{gather*}
      f_t=\hat{f}+t\cdot f_T\\
      g_t=f-f_t=(\lambda-1)\cdot \hat{f}-t\cdot f_T
    \end{gather*}
    so that $f_t+g_t=f$. Note that by strict convexity and continuity of the norm $\norm{f_t}=\norm{\hat{f}+t\cdot f_T}$ is continuous and strictly increasing in $t$.\\
    Now since $t\cdot f_T$ is the tangent through $\hat{f}$ and $g_t$ points from $f_t$ to $f$, for small $t$ for which $\norm{f_t}<\norm{f}$ we must have that
    \begin{equation}
      \label{eq:small_t}
      \norm{f_t+s\cdot g_t}>\norm{f_t} \mbox{ for all } s\in(0,1)
    \end{equation}
    On the other hand for $t$ big enough so that $\norm{f_t}>\norm{f}$ we thus must have
    \begin{equation}
      \label{eq:large_t}
      \norm{f_t+s\cdot g_t}<\norm{f_t} \mbox{ for } s \mbox{ small enough}
    \end{equation}
    But we know that
    $$\lim\limits_{s\rightarrow0}\frac{\norm{f_t+s\cdot g_t}-\norm{f_t}}{s}=\frac{\semiinner{g_t}{f_t}}{\norm{f_t}}=\frac{f^\ast_t(g_t)}{\norm{f_t}}$$
    and since the dual map is norm-to-norm continuous $\frac{f^\ast_t(g_t)}{\norm{f_t}}$ is clearly continuous in $t$. By above discussion the expression is positive for small $t$ and negative for large $t$ so by the intermediate value theorem there exists $t_0$ such that
    $$\frac{f^\ast_{t_0}(g_{t_0})}{\norm{f_{t_0}}} = \frac{\semiinner{g_{t_0}}{f_{t_0}}}{\norm{f_{t_0}}}= 0$$
    so that indeed $\semiinner{g_{t_0}}{f_{t_0}} = 0$ and thus $g_{t_0}$ is tangential to $f_{t_0}$. But this means that $\Omega(f)\geq\Omega(f_{t_0})\geq\Omega(\hat{f})$ as claimed.\\
    Hence we have the bound along the entire ray $\lambda\cdot \hat{f}$ for $1<\lambda\in\R$  which extends along all tangents through those points to the half space given by the tangent through $\hat{f}$, i.e. the shaded region in \cref{fig:radially_increasing}.
  \end{subproof}

  \begin{subproof}[Extend the bound around the circle]
    Next we note that we can actually extend the bound further to apply all the way around the circle, namely $\Omega(f)\geq\Omega(\hat{f})$ for all $f$ such that $\norm{f}>\norm{\hat{f}}$. This is done by considering $f_t=\hat{f}+t\cdot f_T$ as before but then instead of following the tangent into the half space just considered we follow the tangent in the opposite direction around the circle, as shown in \cref{fig:radiating_bound}. We fix another point along that tangent and repeat the process, moving around the circle. We claim that by making the step size along each tangent small enough we can this way move around the circle while staying arbitrarily close to it.\\
    More precisely we need to show that the distance a step along a tangent takes us away from the circle decreases faster than the step along the tangent so that we move considerably further around the circle than away from it with each step, as shown in \cref{fig:step_size}.\\
    \begin{figure}
      \begin{subfigure}{.5\textwidth}
        \centering
        \includegraphics[width=.95\linewidth]{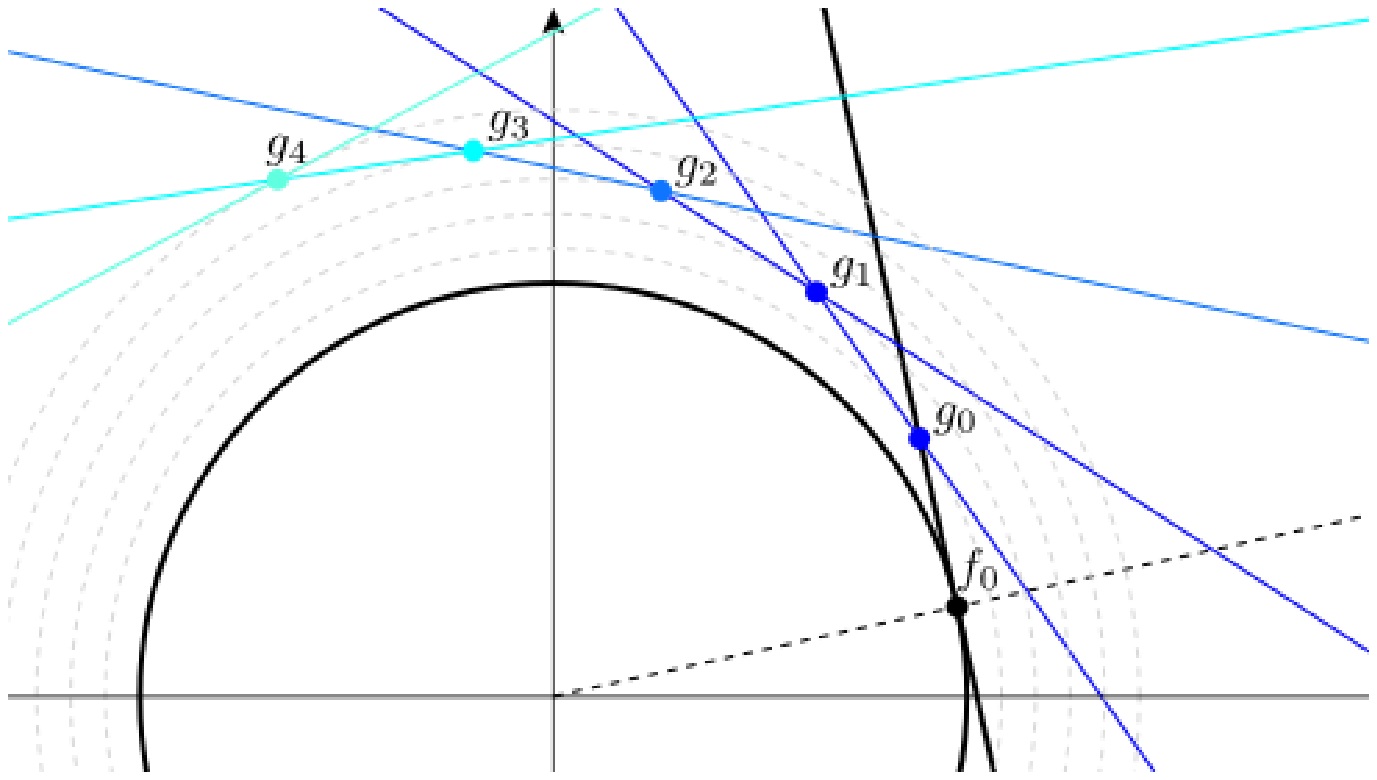}
        \caption{\footnotesize By repeatedly taking steps along tangents we can move all the way around the circle.}
        \label{fig:radiating_bound}
      \end{subfigure}
      \begin{subfigure}{.5\textwidth}
        \centering
        \includegraphics[width=.95\linewidth]{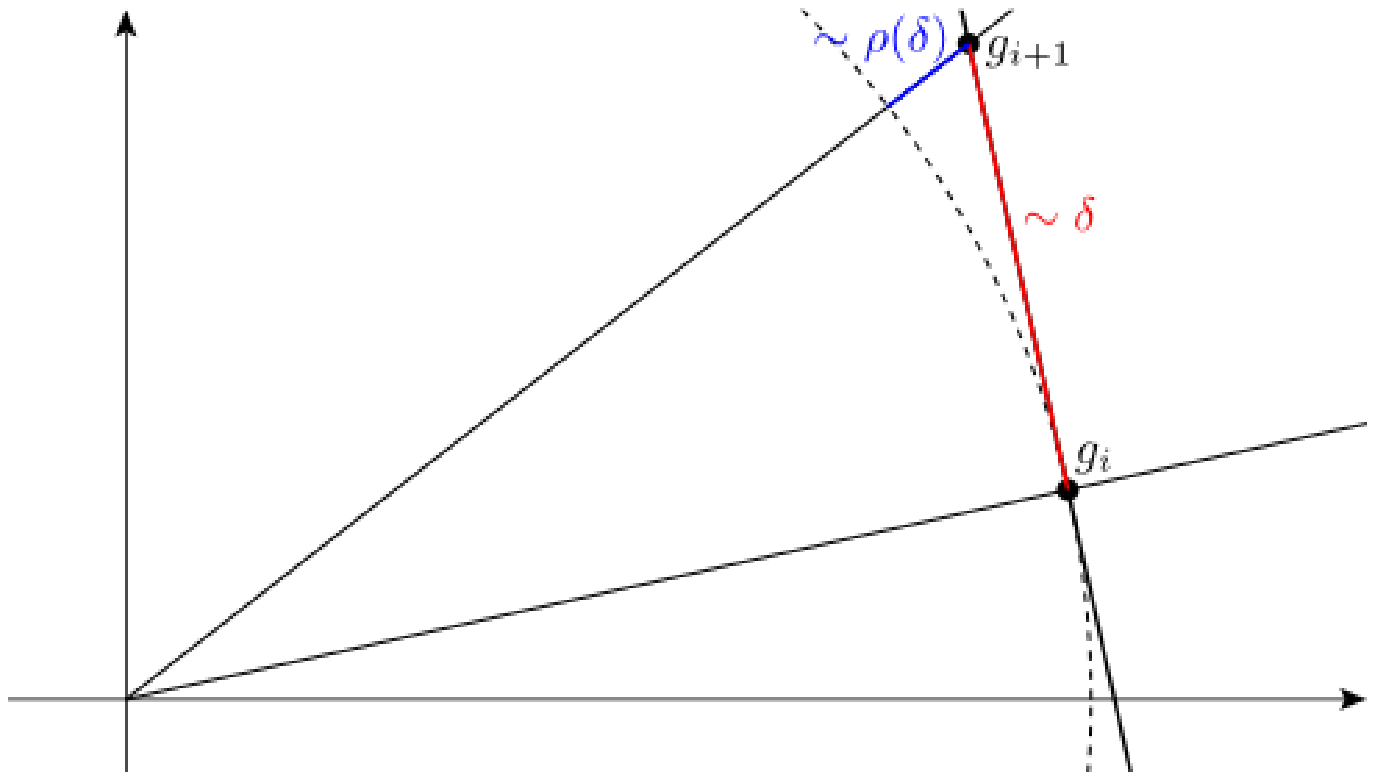}
        \caption{\footnotesize When decreasing the step size along a tangent the step size away from the circle decreases significantly faster so that by making the steps along tangents small enough we can reach any point arbitrarily close to the circle.}
        \label{fig:step_size}
      \end{subfigure}
    \end{figure}\\
    \noindent As stated in \cref{eq:modulus_of_smoothness} let
    $$\rho_\banachspace(\delta) = \sup\left\{\frac{\norm{f+g}+\norm{f-g}}{2}-1 \setseparator \norm{f}=1, \norm{g}=\delta \right\}$$
    be the modulus of smoothness of the space $\banachspace$. For $f,f_T\in\banachspace$ such that\\
    $\semiinner{f_T}{f}=0$, $\norm{f}=1$, $\norm{f_T}=\delta$ we have that $\norm{f+t\cdot f_T}>\norm{f}$ for all $t\neq0$ so in particular $\norm{f-f_T}>\norm{f}$. We thus easily see that
    \begin{align*}
      \norm{f+f_T} & \leq 2 + 2\rho_\banachspace(\delta) - \norm{f-f_T}\\
                 & < 2 + 2\rho_\banachspace(\delta) - \norm{f}\\
                 & = 1 + 2\rho_\banachspace(\delta)
    \end{align*}
    This means that for a step of order $\delta$ along a tangent, i.e.\ $f_T$ of length $\delta$, we take a step of order $\rho_\banachspace(\delta)$ away from the circle. But since $\banachspace$ is uniformly smooth we have that $\frac{\rho_\banachspace(\delta)}{\delta}\rightarrow0$ as $\delta\rightarrow 0$ proving that for small enough $\delta$ indeed the step away from the circle is significantly smaller than the step along the tangent as shown in \cref{fig:step_size}.\\
    Combining both arguments this proves that we can reach any point with norm greater than $\norm{\hat{f}}$ from $\hat{f}$ only by moving along tangents giving the claimed bound.
  \end{subproof}
\end{proof}

Having proved this lemma we are now in the position to prove that indeed any admissible regulariser on a uniform Banach space is non-decreasing in tangential directions. Note that the previous lemma will also play a crucial role in removing the differentiability assumption when establishing the closed form representation of the regulariser in \cref{thm:admissible_regulariser}.

\begin{lemma}
\label{lma:admissibility}
  A function $\Omega$ is admissible if and only if for every $f,f_T\in\banachspace$ such that $\semiinner{f_T}{f}=0$ we have
  $$\Omega(f)\leq\Omega(f + f_T)$$
  if and only if for any fixed $\hat{f}$ and all $f$ such that $\norm{\hat{f}} < \norm{f}$ we have
  $$\Omega(\hat{f})\leq\Omega(f)$$

\end{lemma}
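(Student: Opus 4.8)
I plan to prove the two stated ``iff''s as a cycle $(1)\Rightarrow(2)\Rightarrow(3)\Rightarrow(1)$, where $(1)$ is admissibility of $\Omega$, $(2)$ is the tangential bound ``$\semiinner{f_T}{f}=0\Rightarrow\Omega(f)\le\Omega(f+f_T)$'', and $(3)$ is the radial bound ``$\norm{\hat f}<\norm{f}\Rightarrow\Omega(\hat f)\le\Omega(f)$''. The implication $(2)\Rightarrow(3)$ is exactly \cref{lma:circular_bound}, so nothing new is required there. For $(3)\Rightarrow(2)$: the case $f_T=0$ is trivial, and for $f_T\neq 0$ one checks that $\semiinner{f_T}{f}=0$ forces $\norm{f+f_T}>\norm{f}$ — immediate if $f=0$, and otherwise a consequence of \cref{prop:james_orthogonality} (which gives $\norm{f+\lambda f_T}\ge\norm{f}$ for every $\lambda$, so $\lambda\mapsto\norm{f+\lambda f_T}$ is convex with minimum at $\lambda=0$; if its value at $\lambda=1$ also equalled $\norm{f}$ the restriction would be constant on $[0,1]$, putting $f$, $f+\tfrac12 f_T$, $f+f_T$ on a common sphere and contradicting strict convexity of $\banachspace$) — and then $(3)$ applies. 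So the substance is $(1)\Rightarrow(2)$ and $(3)\Rightarrow(1)$.

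For $(1)\Rightarrow(2)$, fix $f\neq 0$ and $f_T$ with $\semiinner{f_T}{f}=0$; the degenerate cases $f_T=0$ and $f=0$ are handled directly (the latter, asserting $\Omega(0)\le\Omega(f_T)$, via a single data point dual to a nonzero functional annihilating $f_T$). The idea is to choose data for which the representer theorem is maximally constraining. Pick any nonzero $y_\ast\in Y$ and take the single data point $m=1$, $x_1=\tfrac{y_\ast}{\norm{f}^{2}}\,f$, $y_1=y_\ast$. Then $f$ satisfies $\semiinner{f}{x_1}=y_1$, and by linearity of $\semiinner{\cdot}{x_1}$ in its first argument so does $f+f_T$, since $\semiinner{f+f_T}{x_1}=\semiinner{f}{x_1}+\tfrac{y_\ast}{\norm{f}^{2}}\semiinner{f_T}{f}=y_1$; hence the constraints can be satisfied and admissibility applies. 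Now $x_1^\ast$ spans the same line in $\banachspace^\ast$ as $f^\ast$, so, since the duality map is homogeneous and injective, the only $g\in\banachspace$ with $g^\ast\in\vecspan\{x_1^\ast\}$ are the multiples $g=tf$; the constraint $\semiinner{tf}{x_1}=y_1$ then forces $t=1$. Thus $f$ is the \emph{unique} feasible point whose dual element lies in $\vecspan\{x_1^\ast\}$, so the solution delivered by admissibility must be $f$ itself, whence $\Omega(f)\le\Omega(g)$ for every feasible $g$, in particular for $g=f+f_T$.

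For $(3)\Rightarrow(1)$, let $\{(x_i,y_i)\}_{i=1}^m$ be data for which the feasible set $F=\{g\in\banachspace\setseparator\semiinner{g}{x_i}=y_i\ \forall i\}$ is nonempty. Each $\semiinner{\cdot}{x_i}$ is the continuous linear functional $x_i^\ast$, so $F$ is a nonempty closed convex set, and by uniform convexity it has a unique element $f_0$ of minimal norm. By $(3)$, every $g\in F\setminus\{f_0\}$ has $\norm{g}>\norm{f_0}$ and hence $\Omega(g)\ge\Omega(f_0)$, so $f_0$ solves the regularised interpolation problem. Writing $N=\bigcap_i\ker x_i^\ast$, we have $F=f_0+N$, so $\norm{f_0+\lambda n}\ge\norm{f_0}$ for all $n\in N$ and $\lambda\in\R$; by \cref{prop:james_orthogonality} this gives $f_0^\ast(n)=\semiinner{n}{f_0}=0$ for all $n\in N$ (with the case $f_0=0$ immediate). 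Thus $f_0^\ast$ vanishes on $\bigcap_i\ker x_i^\ast$, and the standard finite-dimensional fact that a linear functional vanishing on the intersection of finitely many kernels is a linear combination of the corresponding functionals yields $f_0^\ast=\sum_{i=1}^m c_i x_i^\ast$, establishing admissibility.

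The step I expect to be the main obstacle is the construction in $(1)\Rightarrow(2)$: one must see that a single data point proportional to $f$ already pins $f$ down as the \emph{only} feasible point with a representable dual element, which is exactly what forces admissibility to return $f$ as a minimiser over a feasible set that also contains $f+f_T$, rather than giving only a weaker comparison. The direction $(3)\Rightarrow(1)$ is the classical sufficiency argument, but it is carried through here by the semi-inner-product Riesz machinery of \cref{sec:semi-inner-product}: the identification of $\semiinner{\cdot}{x_i}$ with $x_i^\ast$, the existence and uniqueness of the minimal-norm feasible point from uniform convexity, and \cref{prop:james_orthogonality} to convert the first-order optimality of $f_0$ into the annihilation statement.
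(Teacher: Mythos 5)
Your proof is correct, and one half of it takes a genuinely different route from the paper. The necessity direction $(1)\Rightarrow(2)$ is essentially the paper's own argument: the paper also uses the single constraint with data point proportional to $f$ (it takes $x_1=f$, $y_1=\semiinner{f}{f}$) and the observation that homogeneity and injectivity of the duality map make $f$ the unique feasible point with dual in $\vecspan\{x_1^\ast\}$; your rescaling so that $y_1=y_\ast\in Y$ is if anything slightly more careful about the constraint $y_i\in Y$ than the paper is. The real divergence is in the sufficiency direction. The paper proves $(2)\Rightarrow(1)$ by \emph{starting from an assumed minimiser} $f_0$ and running a dimension-counting argument: it intersects the preimage $X$ of $\vecspan\{x_i^\ast\}$ with $\vecspan\{f_0,Z\}$ (using that the duality map is a homeomorphism) to manufacture a competitor $\overline{f_0}=f_0+f_T$ with representable dual, then uses James orthogonality and \cref{lma:circular_bound} to see $\norm{\overline{f_0}}\le\norm{f_0}$ and hence $\Omega(\overline{f_0})\le\Omega(f_0)$. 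You instead prove $(3)\Rightarrow(1)$ directly from the unique minimal-norm feasible point: monotonicity in norm makes it a minimiser of $\Omega$ (which, as a bonus, \emph{establishes existence} of a solution — a point the paper's proof glosses over with ``let $f_0$ be a solution''), and James orthogonality on $F=f_0+\bigcap_i\ker x_i^\ast$ plus the standard fact that a functional vanishing on $\bigcap_i\ker x_i^\ast$ lies in $\vecspan\{x_i^\ast\}$ gives the representer form without any dimension counting. This is in effect the Micchelli--Pontil minimal-norm characterisation that the paper only invokes later (\cref{prop:minimal_norm_solution}, \cref{thm:space_dependence}), so your argument also makes \cref{sec:space-dependence}'s conclusion nearly immediate. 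Your extra implication $(3)\Rightarrow(2)$ is redundant given the cycle, but its strictness argument (convexity plus strict convexity of the ball) is sound, and matches the strict inequality the paper itself uses.
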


\begin{proof}
  \begin{subproof}[$\Omega$ admissible $\Rightarrow$ nondecreasing along tangential directions]
    Fix any $f\in\banachspace$ and consider the regularised interpolation problem
    $$\min\left\{\Omega(g)\setseparator g\in\banachspace,\semiinner[\banachspace]{f}{g} = \semiinner[\banachspace]{f}{f}\right\}$$
    As $\Omega$ is assumed to be admissible there exists a solution with dual element in $\vecspan\{f^\ast\}$ which by homogeneity of the dual map clearly is $f$ itself. But if $f_T$ is such that $\semiinner{f_T}{f}=0$ then $\semiinner{f+f_T}{f}=\semiinner{f}{f}$ so $f+f_T$ also satisfies the constraints and hence necessarily $\Omega(f+f_T)\geq \Omega(f)$ as claimed. The second claim follows immediately from \cref{lma:circular_bound}.
  \end{subproof}

  \begin{subproof}[Nondecreasing along tangential directions $\Rightarrow$ $\Omega$ admissible]
    Conversely fix any data $\left\{(x_i,y_i)\setseparator i\in\N_m\right\}\subset\banachspace\times Y$ such that the interpolation constraints can be satisfied. Let $f_0$ be a solution to the regularised interpolation problem. If $f_0^\ast\in\vecspan\{x_i^\ast\}$ we are done so assume it is not. We let
    $$X^\ast=\vecspan\{x_i^\ast\}\subset\banachspace^\ast \qquad X=\{x\in\banachspace\setseparator x^\ast\in X^\ast\}$$
    Further denote by $Z\subset\banachspace$ the space corresponding to the orthogonal complement of $X^\ast$ i.e.
    $$Z = \{f_T\in\banachspace\setseparator f^\ast_T\in {(X^\ast)}^\perp\} = \{f_T\in\banachspace\setseparator\semiinner{f_T}{x_i}=0\,\forall i\in\N_m\}$$
    Thus $Z^\ast\cap X^\ast = \{0\}$ and by assumption $f_0^\ast\not\in X^\ast$ and so also $\vecspan\{f_0^\ast\}\cap X^\ast = \{0\}$.\\
    \ \\
    Now by definition we have that
    $$Z=\bigcap_{i\in\N_m}\ker(x_i^\ast)$$
    so the codimension of $Z$ is $m$. Without loss of generality we can assume that not all $y_i$ are zero as otherwise $f_0=f_0^\ast=0$ is a trivial solution in the span of the data points. Since not all $y_i$ are zero $f_0\not\in Z$ and thus $\codim(\vecspan\{f_0,Z\}) = m-1$. But since $X^\ast=\vecspan\{x_i^\ast\}$ and the dual map is a homeomorphism $X$ is homeomorphic to a linear space of dimension $m$. This means that that $X\cap\vecspan\{f_0,Z\}$ is homeomorphic to a one-dimensional space and hence in particular contains a nonzero element.\\
    Now fix such $0\neq f\in X\cap\vecspan\{f_0,Z\}$. As we noted earlier $f$ being nonzero means that $f\not\in\vecspan\{f_0\}$ and $f\not\in Z$. Thus $f=\lambda f_0 + \mu g$ for $\lambda,\mu\neq 0, g\in Z$. By homogeneity of the dual map $\lambda\cdot X=X$ and so
    $$f\in X\cap\vecspan\{f_0,Z\}\Leftrightarrow\frac{1}{\lambda}f\in X\cap\vecspan\{f_0,Z\}$$
    and thus
    \begin{equation}
      \frac{1}{\lambda}f=f_0+\frac{\mu}{\lambda}g = f_0+\widetilde{g}\in X\cap\vecspan\{f_0,Z\}
    \end{equation}
    with $\widetilde{g}=\frac{\mu}{\lambda}g\in Z$.\\
    This means we have constructed an $\overline{f_0}=f_0 + f_T$ with dual element in the span of the data points and $f_T\in Z$ which means by definition of $Z$ that $\overline{f_0}$ satisfies the interpolation constraints. It remains to show that in fact $\overline{f_0}$ is in norm at most as large as $f_0$.\\
    To this end note that for all $f_T\in Z$ by definition $\semiinner[\banachspace^\ast]{x^\ast}{f^\ast_T}=0$ for all $x^\ast\in X^\ast$ and hence we see that for $\overline{f_0}=f_0+f_T\in X$ we get that
    $$\semiinner[\banachspace^\ast]{{(f_0+f_T)}^\ast}{f_T^\ast} = \semiinner{f_T}{f_0+f_T} = 0$$
    But by the equivalence with James orthogonality this means that\\
    $\norm{(f_0+f_T) + t\cdot f_T}>\norm{f_0+f_T}$ for all $t\neq0$ or equivalently
    $$\norm{f_0+f_T} = \min\limits_{t\in\R}\norm{f_0+t\cdot f_T}$$
    In particular $\norm{\overline{f_0}}=\norm{f_0+f_T}<\norm{f_0+0\cdot f_T}=\norm{f_0}$.\\
    But by \cref{lma:circular_bound} we know that for a function which is non-decreasing along tangential directions is non-decreasing in norm so $\norm{\overline{f_0}}<\norm{f_0}$ implies that $\Omega(\overline{f_0})\leq\Omega(f_0)$ and so we have found a solution with dual element in the span of the data points as claimed.
  \end{subproof}
\end{proof}

Using those two results we can now give the proof that admissible regularisers are almost radially symmetric in the sense of \cref{thm:admissible_regulariser}.\\

\begin{proof}[Of \cref{thm:admissible_regulariser}]
  \begin{subproof}[$\Omega$ continuous in radial direction implies $\Omega$ radially symmetric]\label{sbpf:radial_continuity}
    We now show that instead of differentiability, the assumption that $\Omega$ is continuous in radial direction is sufficient to conclude that it has to be radially symmetric. We prove this by contradiction. Assume $\Omega$ is admissible but not radially symmetric. Then there exists a radius $r$ so that $\Omega$ is not constant on the circle with radius $r$ and hence there are two points $f$ and $g$ so that, without loss of generality,  $\Omega(f)>\Omega(g)$.\\
    But then by \cref{lma:circular_bound} for all $1<\lambda\in\R$ we have $\Omega(\lambda g)\geq\Omega(f)$ and thus as $\Omega$ non-negative and non-decreasing $\modulus{\Omega(\lambda g)-\Omega(g)}\geq\modulus{\Omega(f)-\Omega(g)}>0$ contradicting radial continuity of $\Omega$. Hence $\Omega$ has to be constant along every circle as claimed.\\
  \end{subproof}

  \begin{subproof}[Radial mollification preserves being nondecreasing in tangential directions]\label{sbpf:mollified_regulariser}
    The observation in \cref{sbpf:radial_continuity} is useful as we can easily radially mollify a given $\Omega$ so that the property of being non-decreasing along tangential directions is preserved.\\
    Indeed let $\rho$ be a mollifier such that $\rho:\R\rightarrow [0,\infty)$ with support in $[-1,0]$ and for each ray given by some $f_0\in\banachspace$ of unit norm, define the mollified regulariser by
    $$\widetilde{\Omega}(sf_0) = \int\limits_\R \rho(t)\Omega\left((s-t)f_0\right)\differential{t}$$
    We thus obtain a radially mollified regulariser on $\banachspace$ given by
    \begin{align*}
      \widetilde{\Omega}(f) = \widetilde{\Omega}\left(\norm{f}\frac{f}{\norm{f}}\right) & = \int\limits_\R \rho(t)\Omega\left((\norm{f}-t)\frac{f}{\norm{f}}\right)\differential{t}\\
                                                                                        & = \int\limits_{-1}^0 \rho(t)\Omega\left((\norm{f}-t)\frac{f}{\norm{f}}\right)\differential{t}
    \end{align*}
    We check that this function is still non-decreasing along tangential directions, i.e.~we need to show that for $f_T$ s.t. $\semiinner{f_T}{f}=0$ we still have
    \begin{multline}
      \label{eq:integral_estimate}
      \widetilde{\Omega}(f+f_T) = \int\limits_{-1}^0 \rho(t)\Omega\left((\norm{f+f_T}-t)\frac{f+f_T}{\norm{f+f_T}}\right)\differential{t}\\
      \geq \int\limits_{-1}^0 \rho(t)\Omega\left(\left(\norm{f}-t\right)\frac{f}{\norm{f}}\right)\differential{t} = \widetilde{\Omega}(f)
    \end{multline}
    Note that by \cref{lma:circular_bound} we have that $\Omega((\norm{f+f_T}-t)\frac{f+f_T}{\norm{f+f_T}}) \geq \Omega((\norm{f}-t)\frac{f}{\norm{f}})$ for all $t\in[-1,0]$ if $\norm{(\norm{f+f_T}-t)\frac{f+f_T}{\norm{f+f_T}}} \geq \norm{(\norm{f}-t)\frac{f}{\norm{f}}}$ for all $t\in[-1,0]$. But this is clear as it is equivalent to $\modulus{\norm{f+f_T}-t}\geq\modulus{\norm{f}-t}$. As $t$ is non-positive we can drop the modulus to obtain that this happens if $\norm{f+f_T}\geq\norm{f}$ which is just James orthogonality and thus follows from the fact that $\semiinner{f_T}{f}=0$. This proves that the integral estimate \cref{eq:integral_estimate} indeed holds and hence the radially mollified $\widetilde{\Omega}$ is indeed non-decreasing in tangential directions.
  \end{subproof}

  \begin{subproof}[$\Omega$ is as claimed]
    Putting these two observations together we obtain the result. By \cref{sbpf:mollified_regulariser} $\widetilde{\Omega}$ is of the form $\widetilde{\Omega}(f)=h\left(\semiinner{f}{f}\right)$ for some continuous, non-decreasing $h$. But if we consider $\Omega$ along any two distinct, fixed directions given by $f_1,f_2\in\banachspace$, $f_1\neq f_2$, $\norm{f_1}=\norm{f_2}=1$ as $\Omega(t\cdot f_i) = h_{f_i}\left(\semiinner{t\cdot f_i}{t\cdot f_i}\right)$ then the mollifications of both $h_{f_1}$ and $h_{f_2}$ must equal $h$ so $h_{f_1}=h_{f_2}$ almost everywhere. Further by continuity of $h$ they can only differ in points of discontinuity of $h_{f_1}$ and $h_{f_2}$. As each $h_{f_i}$ is a monotone function on the positive real line it can only have countably many points of discontinuity. Clearly as above bounds are only making statements about values outside a given circle and $h$ is itself monotone, each $h_{f_i}$ is free to attain any value within the monotonicity constraint in those points of discontinuity. This shows that $\Omega$ is of the claimed form.
  \end{subproof}
\end{proof}

\begin{remark}
  We see that everything we say about $\Omega$ in this section relies crucially on the observation that it being admissible is a statement about its behaviour along tangents as stated in \cref{lma:admissibility}. But there is in fact no tangent into the complex plane, i.e.\ for fixed $\hat{f}$ there is no tangent that intersects the ray $\{t\cdot e^{i\theta}\cdot\hat{f}\setseparator t\in\R\}$ for any $\theta$. Likewise it is not possible to reach any point along said ray via an ``out and back'' argument as in \cref{spf:half_space_bound} of the proof of \cref{lma:circular_bound}. For this reason it is currently not clear whether one can say anything about the situation in complex vector spaces.
\end{remark}

\section{The solution is determined by the space}
\label{sec:space-dependence}
First of all, while it has been known that for regularisers which are a strictly increasing function of the norm every solution is within the linear span of the data, the proofs in~\cref{sec:representer-theorem} show immediately that something stronger can be said. For a regularised interpolation problem with an admissible regulariser to have a solution which is not in the linear span of the data the regulariser must have a flat region and the solution then has to lie within the flat region.\\
But there is more to be said, in fact it turns out that for admissible regularisers the set of solutions in the linear span is independent of the regulariser.\\
In~\cite{miccelli2004} Micchelli and Pontil consider the minimal norm interpolation problem
$$\inf\{\norm{x}_X \setseparator x\in X, L_i(x) = y_i\,\forall i\in\N_m\}$$
where $X$ is a Banach space and $L_i$ are continuous linear functionals on $X$. Hence this agrees with \cref{eq:regularised_interpolation_semiinner} for $h(t)=\sqrt{t}$ i.e. $\Omega(f)={(\semiinner{x}{x})}^{\frac{1}{2}}$ and $X=\banachspace$ a uniformly convex, uniformly smooth Banach space, giving the minimal norm interpolation problem
\begin{equation}
\label{eq:minimal_norm_interpolation}
  \min\{\norm{f}_\banachspace \setseparator f\in\banachspace, x^\ast_i(f) = \semiinner{f}{x_i} = y_i\,\forall i\in\N_m\}
\end{equation}
This leads to the following result.

\begin{theorem}
\label{thm:space_dependence}
  Let $\Omega$ be admissible. Then any $f_0$ which is such that $f^\ast_0 = \sum\limits_{i=1}^m c_i x^\ast_i$ is a solution of~\cref{eq:regularised_interpolation_semiinner} if and only if it is a solution of~\cref{eq:minimal_norm_interpolation}.
\end{theorem}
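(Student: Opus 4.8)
The plan is to funnel everything through a single object: the unique minimal-norm interpolant. Once I show that this point is the \emph{only} feasible vector whose dual element lies in $\vecspan\{x_i^\ast\}$, and that it automatically minimises every admissible $\Omega$ over the feasible set, the stated equivalence will drop out immediately.

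First I would fix notation: let $X^\ast=\vecspan\{x_i^\ast\}$ and let $F=\{f\in\banachspace\setseparator x_i^\ast(f)=\semiinner{f}{x_i}=y_i\ \forall i\in\N_m\}$ be the feasible set, which is closed, convex, and nonempty by hypothesis. Since $\banachspace$ is uniformly convex it is reflexive and strictly convex, so $F$ has a \emph{unique} element of minimal norm; this is the unique solution $f_0$ of \eqref{eq:minimal_norm_interpolation}. Because $\norm{\cdot}=h(\semiinner{\cdot}{\cdot})$ with $h(t)=\sqrt{t}$ continuous and strictly increasing, the norm is itself an admissible regulariser (the sufficiency already noted before \cref{thm:admissible_regulariser}, or \cref{thm:admissible_regulariser} with $\mathcal{R}=\emptyset$); hence \eqref{eq:minimal_norm_interpolation} admits a solution with dual element in $X^\ast$, and by uniqueness that solution must be $f_0$. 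So $f_0^\ast\in X^\ast$.

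The heart of the argument — and the step I expect to be the main obstacle — is the claim that $f_0$ is the \emph{only} $g\in F$ with $g^\ast\in X^\ast$. Given such a $g$, write $g^\ast=\sum_i c_i x_i^\ast$. Then $\semiinner{g-f_0}{g}=g^\ast(g-f_0)=\sum_i c_i\bigl(x_i^\ast(g)-x_i^\ast(f_0)\bigr)=\sum_i c_i(y_i-y_i)=0$, so $g-f_0$ is James-orthogonal to $g$ in the sense of \cref{prop:james_orthogonality}; taking $\lambda=-1$ there gives $\norm{f_0}\geq\norm{g}$. Running the identical computation with the roles of $g$ and $f_0$ interchanged (this is where $f_0^\ast\in X^\ast$ is used) gives $\norm{g}\geq\norm{f_0}$. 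Hence $\norm{g}=\norm{f_0}=\min_{f\in F}\norm{f}$, so $g$ is also a minimal-norm element of the convex set $F$, and uniqueness forces $g=f_0$. The only delicate point is applying the James-orthogonality equivalence and the uniqueness of the minimal-norm point in the right order; both are in hand from \cref{sec:semi-inner-product} and from uniform convexity.

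Finally I would record that $f_0$ solves \eqref{eq:regularised_interpolation_semiinner} for \emph{any} admissible $\Omega$: by \cref{lma:admissibility} an admissible $\Omega$ is non-decreasing in norm, and since $f_0$ is the \emph{unique} minimal-norm point of $F$, every other $f\in F$ satisfies $\norm{f_0}<\norm{f}$ and therefore $\Omega(f_0)\leq\Omega(f)$. Assembling the pieces: if $f$ has $f^\ast\in X^\ast$ and solves \eqref{eq:regularised_interpolation_semiinner}, then $f\in F$ with $f^\ast\in X^\ast$, so $f=f_0$ by the previous paragraph, and $f_0$ solves \eqref{eq:minimal_norm_interpolation}; conversely, if $f$ has $f^\ast\in X^\ast$ and solves \eqref{eq:minimal_norm_interpolation}, then $f=f_0$ by uniqueness, and $f_0$ solves \eqref{eq:regularised_interpolation_semiinner}. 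This is exactly the asserted equivalence, with the common solution $f_0$ depending only on $\banachspace$ and the data, not on $\Omega$.
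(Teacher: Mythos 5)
Your proof is correct, but it follows a genuinely different route from the paper's. The paper funnels both directions through Micchelli and Pontil's characterisation of minimal-norm interpolants (\cref{prop:minimal_norm_solution}): a solution of \cref{eq:regularised_interpolation_semiinner} whose dual lies in the span peaks its own dual functional, hence solves \cref{eq:minimal_norm_interpolation}; conversely, the peaking functional together with strict convexity gives $\norm{f_0}<\norm{f_0+g}$ for all nonzero $g$ in the kernel of the constraints, and admissibility (via \cref{lma:admissibility}) transfers this bound to $\Omega$. You avoid that external result entirely: you use the admissibility of the norm itself (the sufficiency noted before \cref{thm:admissible_regulariser}, or equally \cref{lma:admissibility}) together with the uniqueness of the minimal-norm element of the closed convex feasible set $F$ to conclude that the minimal-norm interpolant $f_0$ has dual in $\vecspan\{x_i^\ast\}$; then a two-sided James-orthogonality computation ($\semiinner{g-f_0}{g}=g^\ast(g-f_0)=0$ and $\lambda=-1$ in \cref{prop:james_orthogonality}) shows $f_0$ is the \emph{only} feasible point with dual in the span --- note the second application, with the roles reversed, is not even needed, since $\norm{f_0}\leq\norm{g}$ already holds by minimality of $f_0$ in $F$; finally \cref{lma:admissibility} (non-decreasing in norm) makes $f_0$ a minimiser of every admissible $\Omega$ over $F$. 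What each approach buys: the paper's argument is shorter given the cited theorem and exhibits the peaking functional explicitly, which is the natural dual certificate; yours stays entirely within the paper's own toolkit (James orthogonality, \cref{lma:admissibility}, uniform convexity), needs no appeal to~\cite{miccelli2004}, and along the way proves directly the uniqueness statement --- a single common solution with dual in the span, depending only on $\banachspace$ and the data and not on $\Omega$ --- which the paper only records as a consequence in the discussion following the theorem.
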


The proof of this result relies on the following result which was proved by Micchelli and Pontil in~\cite{miccelli2004}.
\begin{proposition}[Theorem 1 in~\cite{miccelli2004}]
\label{prop:minimal_norm_solution}
$f_0$ is a solution of~\cref{eq:minimal_norm_interpolation} if and only if it satisfies the constraints\\
$x^\ast_i(f_0)=y_i$ and there is a linear combination of the continuous linear functionals defining the problem which peaks at $f_0$, i.e.~there exists $(c_1,\ldots,c_m)\in\R^m$ such that
$$\sum\limits_{i=1}^m c_i x^\ast_i(f_0) = \norm{\sum\limits_{i=1}^m c_ix^\ast_i}_{\banachspace^\ast}\cdot\norm{f_0}_\banachspace$$
\end{proposition}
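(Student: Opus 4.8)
The plan is to prove both implications by a duality argument resting on the Riesz representation theorem and the James orthogonality characterisation of \cref{prop:james_orthogonality}. I write $Z=\bigcap_{i\in\N_m}\ker(x_i^\ast)$ for the common null space of the constraint functionals, so that whenever $f_0$ is feasible the feasible set of \cref{eq:minimal_norm_interpolation} is exactly the affine coset $f_0+Z$; minimality of $\norm{f_0}$ among feasible points then reads as $\norm{f_0}\leq\norm{f_0+z}$ for every $z\in Z$. The trivial case $f_0=0$ (which forces all $y_i=0$) is disposed of immediately, so throughout I assume $f_0\neq0$.

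For the forward direction I suppose $f_0$ solves \cref{eq:minimal_norm_interpolation}. It is feasible by definition, and since every $f_0+\lambda z$ with $z\in Z,\ \lambda\in\R$ is also feasible, minimality gives $\norm{f_0+\lambda z}\geq\norm{f_0}$ for all such $z,\lambda$. By \cref{prop:james_orthogonality} (applied with $x=f_0$, $y=z$) this is equivalent to $\semiinner{z}{f_0}=0$, i.e.\ $f_0^\ast(z)=0$, for every $z\in Z$. Thus $f_0^\ast$ vanishes on $\bigcap_i\ker(x_i^\ast)$, and by the elementary algebraic fact that a functional annihilating a finite intersection of kernels lies in the span of the corresponding functionals, I obtain $f_0^\ast=\sum_{i=1}^m c_i x_i^\ast$ for some $c_i\in\R$. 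Setting $L=\sum_i c_i x_i^\ast=f_0^\ast$ and using the Riesz identities $f_0^\ast(f_0)=\semiinner{f_0}{f_0}=\norm{f_0}^2$ and $\norm{f_0^\ast}_{\banachspace^\ast}=\norm{f_0}$ yields $\sum_i c_i x_i^\ast(f_0)=\norm{f_0}^2=\norm{L}_{\banachspace^\ast}\norm{f_0}$, which is precisely the peaking condition.

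For the converse I assume $f_0$ is feasible and that $L=\sum_i c_i x_i^\ast$ peaks at $f_0$, so $L(f_0)=\norm{L}_{\banachspace^\ast}\norm{f_0}$ with $L\neq0$. For any competitor $f$ satisfying the constraints, $x_i^\ast(f)=y_i=x_i^\ast(f_0)$, whence $L(f)=\sum_i c_i y_i=L(f_0)$. The duality bound then gives $\norm{L}_{\banachspace^\ast}\norm{f_0}=L(f_0)=L(f)\leq\norm{L}_{\banachspace^\ast}\norm{f}$, and dividing by $\norm{L}_{\banachspace^\ast}>0$ shows $\norm{f_0}\leq\norm{f}$. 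As $f$ was an arbitrary feasible point, $f_0$ is a minimiser.

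The main obstacle is the forward direction, where optimality of $f_0$ must be converted into a concrete representation of $f_0^\ast$. The crucial bridge is recognising that minimality along every direction in $Z$ is literally James orthogonality $\semiinner{z}{f_0}=0$, which is available only because $\banachspace$ is uniform, so that the semi-inner product is the unique one inducing the norm and \cref{prop:james_orthogonality} applies. The subsequent passage from ``$f_0^\ast$ annihilates $\bigcap_i\ker(x_i^\ast)$'' to ``$f_0^\ast\in\vecspan\{x_i^\ast\}$'' is purely algebraic and needs no topology, as there are only finitely many functionals. I would also take care over the degenerate cases $f_0=0$ and $L=0$, reading the existence clause in the peaking condition as producing a genuine norming functional rather than the vacuous choice $c=0$.
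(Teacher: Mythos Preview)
The paper does not supply its own proof of this proposition: it is quoted verbatim as Theorem~1 of Micchelli--Pontil~\cite{miccelli2004} and used as a black box in the proof of \cref{thm:space_dependence}. So there is no in-paper argument to compare against directly.

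Your argument is correct in the uniform Banach space setting of \cref{eq:minimal_norm_interpolation}. The forward direction is exactly right: minimality of $\norm{f_0}$ over the coset $f_0+Z$ is James orthogonality of $f_0$ to $Z$, which via \cref{prop:james_orthogonality} gives $f_0^\ast|_Z=0$, and the standard linear-algebra fact then places $f_0^\ast$ in $\vecspan\{x_i^\ast\}$; the peaking identity is then just $f_0^\ast(f_0)=\norm{f_0}^2$. The converse is the obvious duality estimate. Your flag on the degenerate choice $c=0$ is apt---the statement only has content with $L\neq0$, and the paper's own use of the proposition (invoking that $L$ ``peaks at a unique point'' by strict convexity) implicitly assumes this.

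By way of comparison, the original Micchelli--Pontil result is stated for general Banach spaces and is obtained by a Hahn--Banach/separation argument rather than through a semi-inner product. Your route trades that generality for the s.i.p.\ machinery already developed in \cref{sec:semi-inner-product}, which makes the proof shorter and keeps it entirely self-contained within the present paper's framework; the price is that it relies on uniform smoothness (for the James-orthogonality equivalence and uniqueness of the duality map), which is in any case the standing hypothesis here.
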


Using this result it is easy to proof \cref{thm:space_dependence}.\\

\begin{proof}[Of~\cref{thm:space_dependence}]
  \begin{subproof}[A solution of~\cref{eq:regularised_interpolation_semiinner} is a solution of~\cref{eq:minimal_norm_interpolation}]
    Assume that $f_0$ is a solution of~\cref{eq:regularised_interpolation_semiinner} such that $f^\ast_0 = \sum\limits_{i=1}^mc_i x^\ast_i$. Then trivially $f_0$ satisfies the interpolation constraints and by definition
    $$f^\ast_0(f_0) = \semiinner{f_0}{f_0}=\norm{f_0}_\banachspace^2=\norm{f^\ast_0}_{\banachspace^\ast}\cdot\norm{f_0}_\banachspace$$
    so $f^\ast_0$, which is a linear combination of the continuous linear problems defining the problem, peaks at $f_0$. Thus by~\cref{prop:minimal_norm_solution} $f_0$ is a solution of~\cref{eq:minimal_norm_interpolation}.
  \end{subproof}

  \begin{subproof}[A solution of~\cref{eq:minimal_norm_interpolation} is a solution of~\cref{eq:regularised_interpolation_semiinner}]
    Assume $f_0$ is a solution of~\cref{eq:minimal_norm_interpolation}. Then by~\cref{prop:minimal_norm_solution} there exists\\
    $(c_1,\ldots,c_m)\in\R^m$ such that the functional $\sum\limits_{i=1}^m c_i x^\ast_i$ peaks at $f_0$, i.e.
    $$\sum\limits_{i=1}^m c_i x^\ast_i(f_0) = \norm{\sum\limits_{i=1}^m c_i x^\ast_i}_{\banachspace^\ast}\cdot\norm{f_0}_\banachspace$$
    But then for any $g\in Z=\{f\in\banachspace \setseparator x^\ast_i(f)=\semiinner{f}{x_i}=0,\,\forall i=1,\ldots,m\}$ we have that
    $$\norm{\sum\limits_{i=1}^m c_i x^\ast_i}_{\banachspace^\ast}\cdot\norm{f_0}_\banachspace = \sum\limits_{i=1}^m c_i x^\ast_i(f_0) = \sum\limits_{i=1}^m c_i x^\ast_i(f_0 + g) < \norm{\sum\limits_{i=1}^m c_i x^\ast_i}_{\banachspace^\ast}\cdot\norm{f_0+g}_\banachspace$$
    where the last inequality is strict because $\sum\limits_{i=1}^m c_i x^\ast_i$ peaks at $f_0$ and by strict convexity it peaks at a unique point. But this inequality shows that
    $$\norm{f_0}_\banachspace < \norm{f_0+g}_\banachspace$$
    for all $g\in Z$ and thus as $\Omega$ is admissible also
    $$\Omega(f_0) < \Omega(f_0+g)$$
    and $f_0$ is a solution of~\cref{eq:regularised_interpolation_semiinner}.
  \end{subproof}
\end{proof}

This result shows that any admissible regulariser on a uniformly convex and uniformly smooth Banach space has a unique solution in the linear span of the data and the solution is the same for every admissible regulariser. This in particular means that it is the choice of the function space, and only the choice of the space, which determines the solution of the problem. We are thus free to work with whichever regulariser is most convenient in application. Computationally in many cases this is likely going to be $\frac{1}{2}\norm{\cdot}^2$, for theoretical results other regularisers may be more suitable, such as in the afore mentioned paper~\cite{miccelli2004} which heavily relies on a duality between the norm of the space and its continuous linear functionals.
\begin{appendix}
\bibliographystyle{acm}
\bibliography{bibliography.bib}
\section{Appendix}

\begin{proof}[Of~\cref{prop:dual_map_cts}]
    We begin by showing norm-to-weak continuity and subsequently extend it to norm-to-norm continuity.\\
    Since $\banachspace$ is reflexive the weak and weak$\ast$ topologies on $\banachspace^\ast$ coincide, so we need to show that if $x_n\rightarrow x$ in norm then $x^\ast_n(y)\rightarrow x^\ast(y)$ for all $y\in\banachspace$.\\
    Now as $\norm{x^\ast_n}_{\banachspace^\ast}=\norm{x_n}_\banachspace$ the sequence $(x^\ast_n)$ is bounded so it has a weakly$\ast$ convergent subsequence $x^\ast_{n_k}\overset{\ast}{\rightharpoonup}\overline{x}^\ast$. By~\cite{brezis2011} proposition 3.13 (iv) we then have
    $$x^\ast_{n_k}\left(x_{n_k}\right)\converges{k}\overline{x}^\ast(x)$$
    But $x^\ast_{n_k}(x_{n_k})=\norm{x_{n_k}}^2\rightarrow\norm{x}^2$ and so $\overline{x}^\ast(x)=\norm{x}^2$. By~\cite{brezis2011} proposition 3.13 (iii) we further know that $\norm{\overline{x}^\ast}\leq\liminf\norm{x^\ast_{n_k}}=\norm{x}$. By strict convexity there is a unique element with those two properties and hence $\overline{x}^\ast=x^\ast$.\\
    Note that this means that for any subsequence there exists a further subsequence converging to a unique limit. This means that in fact the entire sequence converges to this unique limit. Hence indeed $x^\ast_n\rightharpoonup x^\ast$ as claimed.\\
    \ \\
    Having established norm-to-weak continuity one can easily extend it to norm-to-norm continuity using \cite{brezis2011} proposition 3.32. Since $\limsup\norm{x^\ast_n}_{\banachspace^\ast}=\norm{x}_\banachspace=\norm{x^\ast}_{\banachspace^\ast}$ all the assumptions of proposition 3.32 in \cite{brezis2011} are satisfied and so indeed $x^\ast_n\rightarrow x^\ast$ in norm.
\end{proof}

\end{appendix}

\end{document}